\definecolor{green}{rgb}{0.0, 0.5, 0.0}
\definecolor{red}{rgb}{0.8, 0.0, 0.0}
\definecolor{blue}{rgb}{0.01, 0.28, 1.0}
\definecolor{yellow}{rgb}{0.98, 0.93, 0.36}
\definecolor{high}{RGB}{213,94,0}
\definecolor{greenish}{RGB}{94,213,0}
\definecolor{cite}{RGB}{230,159,0}
\newcommand{\evg}[2][]{\todo[color=yellow!50,#1]{{\bf Evg:} #2}}
\newcommand{\moh}[2][]{\todo[color=green!20,#1]{{\bf Moh:} #2}}
\newcommand{\fadedmidrule}{\arrayrulecolor{black!10}\midrule\arrayrulecolor{black}}
\newcommand{\bsX}{{\boldsymbol X}}
\newcommand{\bsp}{{\boldsymbol p}}
\newcommand{\bsx}{{\boldsymbol x}}
\newcommand{\bso}{{\boldsymbol o}}
\newcommand{\bsY}{{\boldsymbol Y}}
\newcommand{\bsy}{{\boldsymbol y}}
\newcommand{\bsh}{{\boldsymbol h}}
\DeclareMathOperator*{\argmin}{arg\,min}
\DeclareMathOperator*{\argmax}{arg\,max}
\newcommand{\Exp}{\mathbb{E}} 
\newcommand{\Prob}{\mathbb{P}} 
\newcommand{\bbR}{\mathbb{R}} 
\newcommand{\bbN}{\mathbb{N}} 
\newcommand{\class}[1]{\mathcal{#1}} 
\newcommand{\enscond}[2]{\left\{#1 \,:\, #2\right\}}
\newcommand{\ens}[1]{\left\{#1 \right\}}
\newcommand{\ind}[1]{\mathbb{I}{\left\{#1\right\}}}
\newcommand{\recall}{\text{Rec}}
\newcommand{\precision}{\text{Prec}}
\newcommand{\eqdef}{\vcentcolon=}
\newcommand{\ie}{{\em i.e.,~}}
\newcommand{\eg}{{\em e.g.,~}}
\newcommand{\wrt}{{\em w.r.t.~}}
\newcommand{\iid}{{\rm i.i.d.~}}
\newcommand{\simiid}{\overset{\text{\iid}}{\sim}}
\DeclareMathOperator{\Top}{top}
\newtheorem{lemma}{Lemma}
\newtheorem{assumption}{Assumption}
\DeclareMathOperator{\size}{S}
\DeclareMathOperator{\error}{Err}
\newcommand{\dataset}[1]{\texttt{{#1}}}
\newcommand{\myparagraph}[1]{\noindent \textbf{\textit{#1.}}}
\def\Pr{{\mathbb{P}}}
\def\X{{\mathcal{X}}}
\def\cC{{\mathcal{C}}}
\DeclareMathOperator{\objective}{F_\Pr}
\DeclareMathOperator{\constraint}{\cC_\Pr}
\newcommand{\myref}[1]{{\protect\hypersetup{linkcolor=high}\ref{#1}}}
\newlist{prosandcons}{itemize}{1}
\setlist[prosandcons]{
  leftmargin=50pt,
  itemsep=0.5pt,
}
\begin{document}

\begin{frontmatter}
\title{Set-valued classification -- overview via a unified framework}
\runtitle{Set-valued classification}







\begin{aug}
\author[A]{\fnms{Evgenii} \snm{Chzhen}\ead[label=e1]{evgenii.chzhen@universite-paris-saclay.fr}},
\author[B]{\fnms{Christophe} \snm{Denis}\ead[label=e2]{christophe.denis@u-pem.fr}},
\author[B]{\fnms{Mohamed} \snm{Hebiri}\ead[label=e3]{mohamed.hebiri@u-pem.fr}}
\and
\author[D]{\fnms{Titouan} \snm{Lorieul}\ead[label=e4]{titouan.lorieul@inria.fr}}
\address[A]{LMO, Universit\'e Paris-Saclay, CNRS, INRIA, \printead{e1}.}
\address[B]{LAMA, Universit\'e Gustave-Eiffel, \printead{e2,e3}.}
\address[C]{INRIA, ZENITH, Universit\'e de Montpellier, \printead{e4}.}
\end{aug}


\begin{abstract}
    Multi-class classification problem is among the most popular and well-studied statistical frameworks.
    Modern multi-class datasets can be extremely ambiguous and single-output predictions fail to deliver satisfactory performance.
    By allowing predictors to predict a set of label candidates, set-valued classification offers a natural way to deal with this ambiguity.
    Several formulations of set-valued classification are available in the literature and each of them leads to different prediction strategies.
    The present survey aims to review popular formulations using a unified statistical framework.
    The proposed framework encompasses previously considered and leads to new formulations as well as it allows to understand underlying trade-offs of each formulation.
    We provide infinite sample optimal set-valued classification strategies and review a general plug-in principle to construct data-driven algorithms.
    The exposition is supported by examples and pointers to both theoretical and practical contributions.
    Finally, we provide experiments on real-world datasets comparing these approaches in practice and providing general practical guidelines.
\end{abstract}

\begin{keyword}
\kwd{Set-valued classification}
\kwd{Multi-class classification}
\kwd{survey}
\kwd{Unified framework}
\kwd{top-k}
\end{keyword}

\end{frontmatter}

\maketitle
\itemsep=0pt
\setcounter{tocdepth}{1}
\tableofcontents
\newpage

\section{Introduction}
Set-valued predictors, unlike single-output ones, allow to provide a set of possible class candidates.
They became popular in recent years due to their ability to cope with class ambiguity, possibly present in multi-class problems.
This work focuses on set-valued methods for multi-class classification problem.
Unlike in single-output setup, it is not completely clear how to define \emph{the} set-valued classifier of interest due to various trade-offs that can be considered.
Therefore, one of our main goals is to provide an overview of various set-valued frameworks, previously considered in the literature, and complement them with their natural extensions.
We highlight similarities shared by all the formulations and emphasize on their differences.
In our exposition we follow a unified statistical setting where the description of the frameworks, methods, and underlying trade-offs become more transparent.

\subsection{From single-output to set-valued classifiers}

\begin{figure}
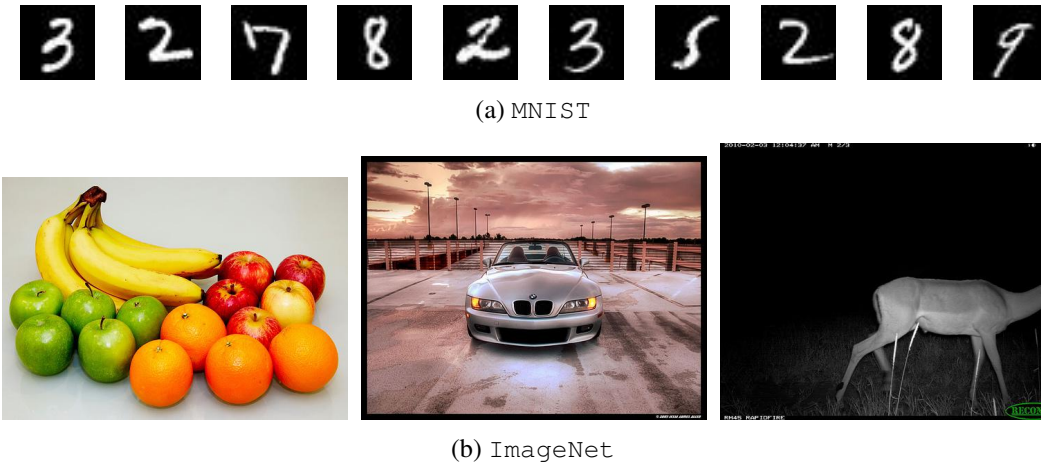

    \centering
    \begin{subfigure}{\textwidth}
        \hfill
        \foreach \i in {1,2,...,10} {
          \includegraphics{imgs/examples/mnist_ex\i}
          \hfill
        }
        \caption{\texttt{MNIST}}
        \label{fig:mnist-images}
    \end{subfigure}
    ~\\
    \begin{subfigure}{\textwidth}
        \hfill
        \foreach \i in {1,3,5} {
          \includegraphics[width=.32\textwidth]{imgs/examples/imagenet_ex\i}
          \hfill
        }
        \caption{\texttt{ImageNet}}
        \label{fig:imagenet-images}
    \end{subfigure}
    \caption{
        Examples from \texttt{MNIST} and \texttt{ImageNet} multi-class classification tasks.
    }
\end{figure}

Classically, multi-class classification considers the problem of learning a \emph{single-output classifier} $h : \class{X} \to [L] \eqdef \{1, \ldots, L\}$ based on data $(\bsX_1, Y_1), \ldots, (\bsX_n, Y_n) \in \class{X} \times [L]$ which minimizes the error rate $\Prob(Y \neq h(\bsX))$ on a new couple $(\bsX, Y)$.

For problems such as \texttt{MNIST} (\autoref{fig:mnist-images}), single-output classifiers have been very successful and we are now able to learn models with a very low error rate, \ie less than 1\% \citep{ciregan2012multi}.
However, nowadays, common datasets are much more complex leading to much higher errors, for instance the error on the \texttt{ImageNet} dataset (\autoref{fig:imagenet-images}) is around 20\% using state-of-the-art models \citep{xie2017aggregated}.
Such high error is not solely due to the difficulty to construct a good classifier, but rather due to the high ambiguity in the dataset.
As shown in \autoref{fig:imagenet-images}, some images are intrinsically ambiguous: in some images, several objects are present; in others, due to occlusion or noise in the image, it is not clear -- even to a human expert -- what class should be predicted.
Forcing a classifier to predict a single class will thus mechanically increase its error rate as in such cases several answers can be considered as correct.

A way to deal with this ambiguity is to allow the classifier to predict a set of candidate classes rather than a single one~\citep{grycko1993}.
These types of classifiers are called \emph{set-valued classifiers}.
They are defined as mapping from the input space $\class{X}$ to the set of all subsets of $[L]$, \ie the power set of $[L]$ denoted by $2^{[L]}$:
\begin{equation*}
    \Gamma: \class{X} \to 2^{[L]}\enspace.
\end{equation*}
Note that by considering set-valued classifiers we do not alter the underlying multi-class data-generating process, that is, the observed data $(\bsX_1, Y_1), \ldots, (\bsX_n, Y_n) \in \class{X} \times [L]$ is still that of multi-class -- each instance $\bsX_i$ is tagged by a unique class $Y_i$ in the sample. Instead, we change the form of prediction functions passing from single outputs to the set-valued ones.

The easiest way to build such a classifier is to always predict a fixed number of classes such as the top-5 most probable classes.
This approach is in fact the one chosen in the official \texttt{ImageNet} classification challenge \citep{imagenet}.
However, as shown in the previous images, in general, there is no reason to predict exactly 5 or any other a priori fixed number of classes all the time.
This observation generalizes to most of recent datasets.
In such cases, it is possible to provide a more informative prediction by relaxing this hard constraint on the set size.

In general, it is thus necessary to develop other formulations of set-valued classification strategies which could be more suited for the problem at hand.
Most of the known formulations can be seen as different ways to choose a trade-off between two complementary quantities: the \emph{error rate} and the \emph{set size} (see Section~\ref{subsec:unif_framework} for its formal definition).
The former quantifies the accuracy of set-valued classifier $\Gamma$ -- how likely $\Gamma(\bsX)$ contains the underlying truth $Y$; The latter quantifies its informativeness -- informative predictions $\Gamma(\bsX)$ should not contain too many candidates.
For instance, top-$5$ classification strategy consists in minimizing the average error rate $\Prob(Y \notin \Gamma(\bsX))$ under the constraint that the set size $|\Gamma(\bsx)|$ is less than or equals to $5$ for every input $\bsx \in \class{X}$.

We emphasize that there is no formulation which is uniformly better than the others or one which is ``incorrect''.
On the contrary, each formulation is complementary to one another: they are suited to different contexts and applications.
The goal of this work is too provide a concise overview of the literature on set-valued classification, compare different frameworks, and fill gaps that were not previously addressed.

\subsection{Set-valued classification -- a unified framework}
\label{subsec:unif_framework}

First of all recall the formal statistical setup of the multi-class classification problem. Let $(\bsX, Y)$ be a random couple taking values in $\class{X} \times [L]$ with joint distribution $\Prob$.
Here $\bsX$ represents the vector of features (\eg an image) and $Y$ is the unique class (\eg a digit present on the image).
An informal goal in the set-valued classification framework is to build a set-valued classifier $\Gamma : \class{X} \to 2^{[L]}$, which has two desired properties: its size $|\Gamma(\bsX)|$ is not too large and it is likely to contain the true class, \ie $Y \in \Gamma(\bsX)$. In order to construct a set-valued classifier with such properties, the practitioner has access to training data $(\bsX_1, Y_1), \ldots, (\bsX_n, Y_n) \simiid \Prob$, independent from $(\bsX, Y)$.

Formalization of the above intuitive idea, not only requires a rigorous definition of size and error of $\Gamma : \class{X} \to 2^{[K]}$, but also, in the spirit of decision theory, it requires a principled way to define an ``ultimate'' set-valued classifier $\Gamma^*$.
Unlike single-output prediction, mentioned in the previous section, where it is natural to target $g^* : \class{X} \to [K]$ which minimizes the 0/1-loss (accuracy), such an obviously natural choice does not exist in the set-valued classification framework.
As already mentioned, a good set-valued prediction always strikes for a trade-off between two quantities, a measure of the accuracy of the predicted sets and a measure of their informativeness.
In this paper, we will focus on a particular choices of such measures: respectively, {\it error rate}, and {\it set size}, which we now define.
\begin{itemize}
    \item {\bf Error rate}: this quantity can be defined point-wise or in average as
    \begin{align}
    \label{eq:Risk_Notions}
        &\error(\Gamma; \bsx) \eqdef \Prob(Y \notin \Gamma(\bsX) \mid \bsX = \bsx) &&\textbf{(point-wise error)}\enspace,\\
        &\error(\Gamma) \eqdef \Prob(Y \notin \Gamma(\bsX)), && \textbf{(average error)}\enspace.
    \end{align}
    Naturally, the average error is related to point-wise error by $\error(\Gamma) = \Exp_{\bsX}[\error(\Gamma; \bsX)]$.
    The error rate is associated to the accuracy of the set-valued classifier $\Gamma$. It is often named {\it coverage}, {\it recall}, or {\it risk}.
    \item {\bf Set size}: analogously set size can also be viewed in a point-wise manner or in average with the following definitions
    \begin{align}
    \label{eq:Size_Notions}
        &\size(\Gamma; \bsx) \eqdef |\Gamma(\bsx)| &&\textbf{(point-wise size)}\enspace,\\
        &\size(\Gamma) \eqdef \Exp_{\bsX} |\Gamma(\bsX)|, &&\textbf{(average size)}\enspace.
    \end{align}
    Similarly to the error, the size admit the same relation $\size(\Gamma) = \Exp_{\bsX}[\size(\Gamma; \bsX)]$.
    It is commonly accepted that the smaller the set size is the more informative the set-valued $\Gamma$ is. In the literature, the set size can also be called {\it expected set size}, {\it adaptive set size} or {\it information}.
\end{itemize}
The balance of the set size and the error rate can be achieved in various ways, and, depending on the application at hand should be considered using domain specific knowledge.
Nevertheless, most of the optimal set-valued classifiers of interest can be formalized via a minimization problem under, possibly, distribution dependent constraints.

\begin{equation}
\tag{{\bf$\class{P}$}}
\label{eq:general_definition_Gamma_star}
\begin{aligned}
        \Gamma^*_{(\objective, \constraint)} \in \argmin_{\Gamma : \class{X} \to 2^{[L]}} \; & \objective(\Gamma) \\ \nonumber
            \text{s.t. } & \Gamma \in \constraint
\end{aligned}\enspace,
\end{equation}
where $\objective(\cdot)$ is a real-valued functional on set-valued classifiers, which depends on the distribution; and $\constraint$ is a (distribution dependent) family of set-valued classifiers with desired properties.
Table~\ref{tab:names} provides a quick overview of the formulations that we describe in the paper. They correspond to particular choices of the functional $\objective(\cdot)$ and the family $\constraint$.
Within this unified framework, the optimal set-valued classifier $\Gamma^*_{(\objective, \constraint)}$ is considered as the gold standard. Consequently, the goal of the practitioner is to build a data-driven set-valued classifier $\hat{\Gamma}$ using the sample $(\bsX_1, Y_1), \ldots, (\bsX_n, Y_n)$ which preserves, as much as possible, all the desirable features (\eg error rate or size) of the optimal set-valued classifier $\Gamma^*_{(\objective, \constraint)}$.

Despite the large variety of pairs $(\objective, \constraint)$ that can be considered, most of the frameworks, previously described in the literature, admit a general plug-in driven approach for construction of $\hat\Gamma$.
It consists of two principal steps: first, derive the closed-form solution of the problem~\eqref{eq:general_definition_Gamma_star}; second, estimate all the unknown quantities in the expression for $ \Gamma^*_{(\objective, \constraint)}$.
\begin{table}[]
    \resizebox{\textwidth}{!}{%
        \centering
        \begin{tabular}{@{}llllll@{}}
    \toprule
    \textsc{Name} &
    \textsc{Objective} $\objective(\Gamma)$ &
    \textsc{Constraint} $\constraint$ &
    \textsc{Optimal} $\Gamma^*_{(\objective, \constraint)}$ &
    \textsc{Section} & \\
    \midrule
    Penalized &
    $\error(\Gamma) + \lambda \size(\Gamma)$ &
    N/A &
    $\Gamma^*_\lambda$: threshold &
    Section~\ref{sec:Soft_constraint_types} \\
    \fadedmidrule
    Point-wise size &
    $\error(\Gamma)$ &
    $\size(\Gamma; \bsx) \leq k$ &
    $\Gamma^*_k$: top-$k$ &
    Section~\ref{subsub:top-k} \\
    \fadedmidrule
    Average size &
    $\error(\Gamma$) &
    $\size(\Gamma) \leq \bar{k}$ &
    $\Gamma^*_{\bar{k}}$: threshold &
    Section~\ref{subsec:average_size} \\
    \fadedmidrule
    Point-wise error &
    $\size(\Gamma)$ &
    $\error(\Gamma; \bsx) \leq \epsilon$ &
    $\Gamma^*_\epsilon$: top-$k(\bsx)$ &
    Section~\ref{subsec:point_error} \\
    \fadedmidrule
    Average error &
    $\size(\Gamma)$ &
    $\error(\Gamma) \leq \bar{\epsilon}$ &
    $\Gamma^*_{\bar{\epsilon}}$: threshold &
    Section~\ref{subsec:average_error} \\
    \fadedmidrule
    F$_\beta$-score &
    $\frac{(1 + \beta^2)(1 - \error(\Gamma))}{\beta^2 + \size(\Gamma)}$ &
    N/A &
    $\Gamma^*_F$: threshold &
    Section~\ref{subsec:f-score} \\
    \fadedmidrule
    Hybrid size &
    $\error(\Gamma)$ &
    $
    \begin{aligned}
        & \size(\Gamma) \leq \bar{k}, \\
        & \size(\Gamma; \bsx) \leq k
    \end{aligned}
    $ &
    hybrid &
    Section~\ref{sec:hybrids}\\
    \fadedmidrule
    Hybrid error &
    $\size(\Gamma)$ &
    $
    \begin{aligned}
        & \error(\Gamma) \leq \bar{\epsilon}, \\
        & \error(\Gamma; \bsx) \leq \epsilon
    \end{aligned}
    $ &
    hybrid &
    Section~\ref{sec:hybrids} \\
    \bottomrule
\end{tabular}
    }
    \caption{
        Nomenclature and description. We recall the notation: average error rate $\error(\Gamma) = \Prob(Y \notin \Gamma(\bsX))$; point-wise error rate $\error(\Gamma, \bsx) = \Prob(Y \notin \Gamma(\bsX) \,|\, \bsX = \bsx)$; average set size $\size(\Gamma) = \Exp |\Gamma(\bsX)|$; point-wise set size $\size(\Gamma, \bsx) =  |\Gamma(\bsx)|$.
    }
    \label{tab:names}
\end{table}
The derivation of the explicit form of the optimal set-valued classifier $ \Gamma^*_{(\objective, \constraint)}$ assuming that the distribution $\Prob$ is known can be typically performed in analytic way.
The central objects are the marginal distribution of the features $\Prob_{\bsX}$ and the posterior distribution\footnote{Here $\Delta^{L - 1}$ stands for the probability simplex on $L$ atoms.} $\bsp = (p_1, \ldots, p_L)^\top : \class{X} \to \Delta^{L - 1}$ of the label $Y$ defined component-wise for all $\ell \in [L]$ as $p_{\ell}(\bsx) \eqdef \Prob(Y = \ell \mid \bsX = \bsx)$.
The former tells us how often an observation $\bsx$ (or its neighbourhood) can be observed, while the latter measures the relevance of label $\ell$ for the instance $\bsx$.
The explicit expression for the optimal set-valued classifier $ \Gamma^*_{(\objective, \constraint)}$ in all the considered frameworks reduces to one of the following three cases:
\begin{itemize}
    \item {\bf Thresholding}: there exists $\theta = \theta(\Prob, \constraint, \objective) \in [0, 1]$, such that for all $\bsx \in \class{X}$ the optimal set-valued classifier can be defined as
        \begin{align*}
            \Gamma^*_{(\objective, \constraint)}(\bsx) = \enscond{ \ell \in [L] }{ p_{\ell}(\bsx) \geq \theta } \enspace.
        \end{align*}
    In this case, for a given instance $\bsx$, the prediction $\Gamma^*_{(\objective, \constraint)}(\bsx)$ consists of those candidates whose probability $p_{\ell}(\bsx)$ surpasses a certain level $\theta$.\\
    {\bf NB.} $\theta = \theta(\Prob, \constraint, \objective)$ is allowed to depend on the distribution $\Prob$, which is unknown.
    \item {\bf Point-wise top-k}: for all $\bsx \in \class{X}$, there exists $k = k(\bsx, \Prob) \in [L]$, such that the optimal set-valued classifier can be defined as
    \begin{align*}
        \Gamma^*_{(\objective, \constraint)}(\bsx) = \Top_{\bsp}(\bsx, k)\enspace,
    \end{align*}
    where operator $\Top_{\bsp}(\bsx, k)$ outputs $k(\bsx, \Prob)$ labels with largest values of $\bsp(\bsx) = (p_1(\bsx), \ldots, p_L(\bsx))^\top$. Note that following this strategy implies that the point-wise size of $\Gamma$ satisfies $\size(\Gamma, \bsx) = k(\bsx, \Prob)$.\\
    {\bf NB.} $k = k(\bsx, \Prob)$ is allowed to depend on a point $\bsx \in \class{X}$ and the distribution $\Prob$.
    \item {\bf Hybrid}: the hybrid strategy can be obtained by a point-wise intersection of the thresholding rule and the point-wise top-$k$ rule. We describe several examples of such strategy in Section~\ref{sec:hybrids}. See also Table~\ref{tab:names} for some examples.\\
    {\bf NB.} we are not aware of previus contributions that consider hybrid frameworks.
\end{itemize}
We again emphasize that in case of thresholding, the value of $\theta = \theta(\Prob)$ can depend on the unknown distribution $\Prob$ and ought to be estimated.
Meanwhile, in case of point-wise top-$k$, the parameter $k = k(\bsx, \Prob)$ can depend on both: point $\bsx$ where prediction should be constructed as well as the unknown distribution.

\subsection{Organization of the paper}
Section~\ref{sec:PopularSvFramework} presents most popular set-valued classification strategies and illustrates them in simple examples.
People that are interested only on the numerical comparison of the different framework would directly go to Section~\ref{sec:numerical}. Hybrid methods, that involve sophisticate constraints or more that one constraint are introduced in Section~\ref{sec:hybrids}. In particular, we state in this section the closed-form of the optimal hybrid set-valued classifiers.

\subsection{Notation}
Throughout this work we use the following generic notation.
For every $L \in \bbN$, we denote by $[L] \eqdef \{1,\ldots,L\}$, the set of first $L$ positive integers.
For every finite set $A$, we denote by $|A|$ the cardinality of $A$.
The power set (the set of all subsets) of a finite set $A$ is denoted by $2^A$.
For a monotone decreasing function $F$ we denote by $F^{-1}$ it generalized inverse\footnote{See~\cite{embrechts2013note} for general definition of the generalized inverse of a monotone function.} defined point-wise as $F^{-1}(u) \eqdef \inf \enscond{ t \in \bbR }{ F(t) \leq u}$.

\section{Related classification frameworks}
\label{sec:related-classification-frameworks}
Set-valued classification has many connections with other classification setups that may lead to confusions and the differences may not well understood at the first sight.
The purpose of this section is to clarify the relation of set-valued classification with other frameworks and emphasize the key differences.

\subsection{Multi-label classification}
At the first glance set-valued classification seems to be directly connected to \emph{multi-label classification}~\citep{dembczynski2012label,zhang2013review}.
This is a recurrent source of confusion for readers and practitioners and we dedicate this section to clarify this point.
Mainly, the confusion comes from the fact that in both settings a set of label candidates is predicted as an output.
However, the crucial difference with set-valued classification lies in the data-generating process.
From purely practical perspective, if the classes $Y_i$'s in a given dataset are elements of $[L]$ and not of $2^{[L]}$, then there should be no confusion between the multi-class and the multi-label settings.
In other words, it is the training sample that dictates the setting and the underlying data generation process.

To be more precise, in multi-label classification each input $\bsx \in \class{X}$ is associated with a set of labels $\bsy \in \{0, 1\}^L$.
During training, we are given a set of such pairs $(\bsX_i, \bsY_i)_{i = 1}^n$ and the goal is to learn a \emph{multi-label classifier} $\bsh = (h^{(1)}, \ldots, h^{(L)})^\top : \class{X} \to \{0, 1\}^L$ from it.
A typical example of such data is the \texttt{Wikipedia} dataset~\citep{zubiaga2012enhancing,Bhatia16}, where $\bsX_i$ is a Wikipedia article and $\bsY_i$ describes the categories that are intended to group similar articles together.

Already at this level we can observe the difference between multi-labeled and multi-class data.
In the former,  a given input $\bsx$ is tagged by $\bsy = (y^{(1)}, \ldots, y^{(L)})^\top$, \ie $L$ distinct 0/1 entries.
Meanwhile, in the latter the label is a unique integer.
From the theoretical perspective, we note that in the multi-label setting each label $Y^{(\ell)}$ has a probability $q_\ell(\bsx) \eqdef \Prob(Y^{(\ell)} = 1 \mid \bsX = \bsx)$ to be present and $1-q_\ell(\bsx)$ to be absent from the set of labels for a given instance $\bsx$.
Note that in general the $\sum_{\ell = 1}^L q_\ell(\bsx)$ can, and most likely does, exceed one.
In contrast, in multi-class classification, the label $Y \in [L]$ of a given input $\bsx$ is sampled from the conditional probability $p_{\ell}(\bsx) = \Prob( Y=\ell \mid \bsX = \bsx )$ which is a categorical distribution, \ie $\sum_{\ell = 1}^L p_{\ell}(\bsx) = 1$.

Despite the above difference, the eventual goal in both frameworks is still to predict a set and any multi-class problem can be viewed as an instance of multi-label problem with additional dependency structure introduced on the classes.
From this point of view, any method tailored for multi-label problems can be used in the framework of set-valued classification.
Indeed, note that for any multi-label prediction $\bsh : \class{X} \to \{0, 1\}^L$ one can build a set-valued one $\Gamma_{\bsh}(\bsx) = \enscond{\ell \in [L]}{\bsh^{(\ell)}(\bsx) = 1}$.
However, since these methods are not constructed with the multi-class generation process in mind, their applicability is limited and set-valued minded algorithms are better suited in this case.

Of course there are other, more subtle, differences between the two frameworks. For instance, we did not discuss the choice of the risk measure for the multi-label classification nor we talked about possible dependencies inside $\bsY$.
These directions are exciting and bring a lot of interesting questions, but as it should be already clear, in a completely different framework from what we consider in this manuscript.

\subsection{Classification with reject option}

An alternative approach to address the ambiguity in multi-class classification problem is via the so-called classification with reject option~\citep{Chow57,Chow70,Herbei_Wegkamp06,Ni19,ramaswamy2018consistent,Wu07,Zhang18,denis2020consistency}.
In this framework, instead of predicting a set capturing the ambiguous classes, the classifier is allowed to abstain from prediction, that is, it informally says \emph{``I do not know''}.
This refusal has a cost -- a parameter of the problem in this setting -- depending on which lower error rate can be achieved compared to a single class outputs.

Speaking formally, the predictor with reject option is a function of the form $h: \class{X} \to [L] \cup \{ \circledR \}$ where $\circledR$ represents a refusal to answer.
One can view this predictor as a special form of set-valued classifier. Indeed, whenever $h$ outputs a value in $[L]$, then the corresponding set-valued classifier outputs a singleton.
Meanwhile, the rejection symbol $\circledR$ for $h$ corresponds to two situations for a set-valued classifier, either the set-valued classifier outputs the empty set or more than one label.
Consequently, the classifiers with rejection are strictly included in the set-valued predictors, since the latter allow a more flexible quantification of ``I do not know'' prediction.
%

\subsection{Conformal prediction}
\label{sec:ConformalPrediction}
Seminal ideas of set-valued classification appear also in the works by Vovk, Gammerman, and Shafer on conformal prediction theory~\citep{vovk2005algorithmic}.
Given a set of supervised data $(\bsX_1, Y_1), \ldots, (\bsX_n, Y_n)$, a new instance $\bsX$, and a confidence level $\alpha$ the goal in conformal prediction theory is to provide a set $\Gamma_{\alpha}(\bsX) = \Gamma_{\alpha}((\bsX_1, Y_1), \ldots, (\bsX_n, Y_n), \bsX) \subset 2^{[L]}$, which satisfies
\begin{align*}
    \mathbf{P}(Y \notin \Gamma_{\alpha}(\bsX)) \leq \alpha\enspace,
\end{align*}
where $\mathbf{P}$ stands for the joint distribution of $(\bsX_1, Y_1), \ldots, (\bsX_n, Y_n), (\bsX, Y)$.
To achieve this goal, the conformal prediction theory relies on the exchangeability assumptions, which is slightly more general than the \iid assumption.
Moreover, conformal theory relies on the conformity measure which assigns a level of similarity of each instance $(\bsX, Y)$ \wrt the data.

Though we do not focus on the conformal predictions, the reader should keep in mind that this powerful framework should not be put against the direction that we take here. Rather, it should be viewed as a viable alternative (and sometimes complementary) way to build set-valued classifiers, with strong expected error rate or expected size guarantees, having its advantages and disadvantages over the framework described here.
The major advantage is of course the assumption free average error rate (or average size) guarantees, while possible disadvantages include high computational complexity and randomized nature of the constructed classifiers. The latter means that for the same input $\bsX$ the set $\Gamma_{\alpha}(\bsX)$ can be different from one experiment to another depending on the underlying randomness of $\Gamma_{\alpha}$.
For a broad review of conformal prediction theory with main theoretical and practical advances we refer to~\cite{vovk2005algorithmic, vovk2017}.


\section{Practical motivations and applications}
\label{sec:practical-applications}
In general, set-valued classification can be applied on top of any multi-class problem.
At the same time, in certain scenarios, the set-valued approach might be strictly preferred to the single-output paradigm.
This is especially the case in applications where the observed data $(\bsX_i, Y_i)$ comes from a multi-class classification problem -- we observe a unique class $Y_i \in [L]$ associated with the observation $\bsX_i$ -- but, the uniqueness of the class is not actually intrinsic to the problem at hand.
In this section we describe several applications and data acquisition processes where set-valued classification paradigm might be preferred.


\subsection{Fault-intolerant applications}
Set-valued classification framework arises naturally in {\it fault-tolerance} type problems. The goal is to produce a prediction with (almost) null error.
While such guarantee can hardly be expected from a single output predictor, accurate as it may be, drastically low errors can be achieved by set-valued classifiers.
This is especially the case if the error rate is added as a desirable constraint (\emph{cf.} Section~\ref{subsec:average_error}). Typical examples of this problem are connected with medical diagnosis or aeronautic purposes -- areas where the price for a wrong prediction is too high to be tolerated~\citep{kourou2015machine,lambin2017decision}. In these type of applications the set-valued classification framework can serve as a replacement of a more standard approach based on re-weighting~\citep{turney1994cost,elkan2001foundations} of errors and modification of score function~\citep{ling2003auc}.

\subsection{Highly ambiguous datasets}
High ambiguity between classes emerges in numerous applications and this is the bedrock of the use of set-valued frameworks.
In practice, it is very common that modern large-scale datasets contain a fair amount of ambiguity among classes.
It is especially the case in standard Fine-Grained Visual Categorization (FGVC) datasets such as \texttt{PlantCLEF2015}~\citep{plantclef2015}, \texttt{ImageNet}~\citep{imagenet}\footnote{See \url{https://sites.google.com/view/fgvc7} for additional examples.}.


As an example, consider a classification scenario where the output $y_i$ lies in $[L]$.
At the same time, the input $\bsx_i$ is a representation of a latent object $\bso_i$ which is indeed associated with a single label $y_i \in [L]$.
However, the observed input $\bsx_i$ is a blurred, noisy, corrupted, or partial version of $\bso_i$ and it may be hard to recover the true label $y_i$ based solely on $\bsx_i$.
Notably, the noise in measurement is actually responsible for the ambiguity and set-valued classifiers can serve as a remedy in this case.

A prominent example where such processes occur is connected to plant image recognition~\citep{plantclef2015,champ2015comparative,champ2016categorizing}.
Consider the case where the latent object $\bso_i$ is a given plant species and $\bsx_i$ is its image typically taken by a smartphone.
While a botanist expert of this flora is able to provide the exact specie $y_i$ of the object $\bso_i$, it is possible that even an expert may fail to classify based on the image $\bsx_i$.
Indeed, the latter might be too blurred or might consist of only one organ of the plant, say the flower, which may not provide enough information to be perfectly discriminated from other species.
On the other hand, based on the partial information provided by $\bsx_i$, the expert is typically able to extract a small subset of species $\Gamma(\bsx_i) \subset [L]$ that is likely to contain the true description $y_i$ of $\bso_i$. In this case, the expert naturally acts as if he is a set-valued classifier.



\subsection{Collecting a single positive label in multi-label data}

\begin{figure}[t!]
    \centering
    \begin{subfigure}[b]{.3\textwidth}
        \includegraphics[width=\textwidth]{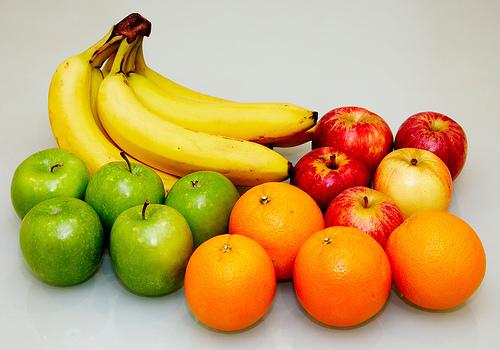}
        \caption{\textbf{banana}, orange, Granny Smith}
        \label{fig:imagenet-multiobject-example}
    \end{subfigure}
    \hfill
    \begin{subfigure}[b]{.3\textwidth}
        \includegraphics[width=\textwidth]{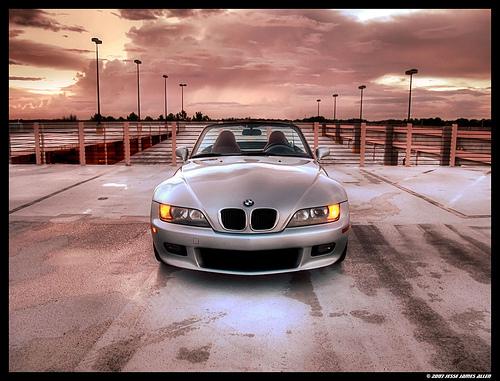}
        \caption{\textbf{sports car}, convertible}
        \label{fig:imagenet-multiattribute-example}
    \end{subfigure}
    \hfill
    \begin{subfigure}[b]{.3\textwidth}
        \includegraphics[width=\textwidth]{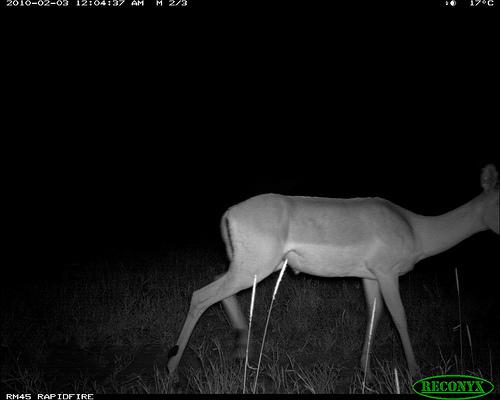}
        \caption{\textbf{impala}, gazelle, hartebeest}
    \end{subfigure}
    \caption{
        Examples from \texttt{ImageNet} containing either multiple objects (a), or a single object with multiple matching attributes (b), or an intrinsically ambiguous image containing a single object associated with a single class (c).
        The official annotation associated is shown in bold.
    }
\end{figure}

Unlike highly ambiguous datasets, positive-only data is inherently multi-label: each observation contains several objects or attributes belonging to different classes, see \Cref{fig:imagenet-multiobject-example,fig:imagenet-multiattribute-example}.
However, in certain cases, annotating multi-label data can be very expensive since a positive or negative label must be collected for every class.
A common modeling assumptions was described in~\citep{royle2012likelihood,hastie2013inference,mac2019presence} where only a \emph{single} positive label $y_i \in [L]$ is reported for each instance $\bsx_i$.
The popularity of this framework stems from the fact that it is typically easy to detect a single present object on a picture, while it is increasingly harder and more time consuming to detect all the present objects.
Note how in this case, the inherently multi-label problem is transformed into a multi-class one due to the data gathering process.
As an example, one might imagine the scenario where a human expert observes an image $\bsx_i$ which might involve several objects, and reports only the first one that was noticed ignoring the rest.
The most famous benchmark dataset which falls within the scope of this framework is \texttt{ImageNet}~\citep{imagenet}, where each image $\bsx_i$ is annotated by one and only one class $y_i \in [L]$.
 \section{Main set-valued classification frameworks}
 \label{sec:PopularSvFramework}
Several set-valued classifiers fall within the formulation~\eqref{eq:general_definition_Gamma_star}. Because there have been contributions from different fields and there is no standardized naming for the different formulations, providing an exhaustive overview of the literature is impossible. Therefore, in this section we made the choice to focus on the four most popular and intuitive set-valued classification frameworks for multi-class setting described by such unified formulation~\eqref{eq:general_definition_Gamma_star}. (We differ to Section~\ref{sec:discussion_other_framework} a discussion on more involved frameworks which also fall with this formulation or on other formulations obtained as hybrids of the four main formulations since they are not considered in the comparison study of this work.) In these four formulations, the set $\constraint$ in \eqref{eq:general_definition_Gamma_star} constrains whether the error rate or the size of the set-valued classifier.

This section is organized in four paragraphs, each of them is devoted to one set-valued formulation. For each particular set-valued problem, we define the set-valued classifier as a particular solution of the formulation~\eqref{eq:general_definition_Gamma_star}, exhibit a closed-form of an optimal solution, illustrate the strategy to build a data-driven procedure, discuss its advantages and disadvantages, and provide a brief literature review.

Before proceeding, let us comment on some general considerations (that can be skipped by the reader without any impact): the optimal solutions that we will provide highlight a general plug-in type approach to construct practically relevant set-valued classification procedures for all frameworks. In particular, all the optimal set-valued classifiers rely on the conditional probabilities $\bsp = (p_1, \ldots, p_L)^\top$.
Therefore, in order to build a data-driven counterpart, we first need an estimator $\hat\bsp = (\hat p_1, \ldots, \hat p_L)^\top$ of $\bsp $, and this step requires a \emph{labeled} dataset. Furthermore, in order to enforce the constraint in~\eqref{eq:general_definition_Gamma_star}, it turns out that the estimation step for some set-valued classifiers is more involved and then needs the estimation of an additional parameter that depends on the underlying data-distribution.
As a consequence, the estimation procedures can be categorized into three types depending on whether they require \emph{no additional data}, an additional \emph{unlabeled} data, or an additional \emph{labeled} additional data.
Note that in the case where additional unlabeled data is required, if such data was already available beforehand, \ie in the semi-supervised setting~\citep{Vapnik98}, then it can be leveraged directly by the estimation procedure. However, as we will see, in most cases, we do not require a lot of additional data, less than for training $\hat\bsp$, unlike most semi-supervised cases where unlabeled data is abundant, more than training data. If, on the other hand, only labeled data is available, then the estimation procedure might require to split this data is two and to discard some labels.



Lastly, let us emphasize that for some of the results of this section we need the following mild assumption.
\begin{assumption}
    \label{ass:continuity_global}
    Random variables $p_{\ell}(\bsX)$ are non-atomic for all $\ell \in [L]$.
\end{assumption}
This assumption is equivalent to assuming that the distribution of $p_{\ell}(\bsX)$ is continuous for all $\ell \in [L]$.
Essentially, Assumption~\ref{ass:continuity_global} is a sufficient condition under which several set-valued frameworks are well defined.
That is to say that the optimal set-valued classifier $\Gamma^*_{(\objective, \constraint)}$ exists, is unique, and is deterministic.
We refer to~\citep{chzhen2019minimax,Sadinle2019} for a broad discussion on the role of Assumption~\ref{ass:continuity_global}.
In particular,~\citep{Sadinle2019} describe a way to bypass this assumption by considering randomized set-valued classifiers.
As it is not the main scope of this paper we resort to Assumption~\ref{ass:continuity_global} when necessary to avoid this technicality.

\subsection{Point-wise control -- almost sure type constraints}
In this part we describe those frameworks that can be formulated withing the unified paradigm~\eqref{eq:general_definition_Gamma_star} with a set $\constraint$ defined by a point-wise type constraint on the classifier $\Gamma$.
\subsubsection{Point-wise size control: Top-$k$ classifier.}
\label{subsub:top-k}

In classification tasks, it is common to replace the top-1 error rate by a top-k error rate.
For instance, \texttt{ImageNet} classification task \citep{imagenet} uses top-5 error rate as the main performance measure to account for the possible presence of several objects of different classes in the images.

This first natural approach in set-valued prediction simply consists in predicting sets of same size $k$ for every input:
\begin{align*}
    |\Gamma(\bsx)| = k\enspace,
\end{align*}
which is the most straightforward generalization of the single-output strategy.
In this case, the problem can be formulated as finding the optimal classifier $\Gamma^*_k$ solving the following problem,
\usetagform{bold}
\begin{equation} \tag{point-wise size control}
\label{eq:top_k}
\begin{aligned}
    \Gamma^*_k \in \argmin_\Gamma \; & \Prob(Y \notin \Gamma(\bsX) ) \\
    \text{s.t. } &  |\Gamma(\bsx)| \leq k \quad\text{a.s. } \Prob_{\bsX}
\end{aligned}
\end{equation}
\usetagform{default}
Note that for this formulation $\constraint = \enscond{\Gamma}{|\Gamma(\bsx)| \leq k \text{ a.s. } \Prob_{\bsX}}$. Though, formally this definition depends on the marginal distribution $\Prob_{\bsX}$ one can immediately notice that the constraint ${|\Gamma(\bsx)| \leq k \text{ a.s. } \Prob_{\bsX}}$ can be replaced by $|\Gamma(\bsx)| \leq k$ for all $\bsx \in \class{X}$.
The attractive feature of such a formulation is that the optimal classifier $\Gamma^*_k$ admits a simple and intuitive closed-form solution -- at every point $\bsx$ it outputs $k$ most probable candidates.
\begin{lemma}
\label{lem:top_k_oracle}
For all $\bsx \in \class{X}$ the classifier $\Gamma^*_k$ can be obtained by
\begin{align*}
    \Gamma^*_k(\bsx) = \Top_{\bsp}(\bsx, k)\enspace,
\end{align*}
where for all $\bsx \in \mathcal{X}$, and $k \in [L]$, the operator $\Top_{\bsp}(\bsx, k)$ outputs the $k$ labels with largest values among $p_1(\bsx), \ldots, p_{L}(\bsx)$.
\end{lemma}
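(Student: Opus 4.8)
The plan is to exploit the fact that both the objective and the constraint in~\eqref{eq:top_k} decouple across $\bsx$, which reduces the infinite-dimensional optimization over set-valued maps to a family of independent finite combinatorial problems, one for each $\bsx \in \class{X}$. First I would rewrite the objective by the law of total expectation: for any set-valued classifier $\Gamma$,
\[
    \error(\Gamma) = \Prob(Y \notin \Gamma(\bsX)) = \Exp_{\bsX}\left[\error(\Gamma; \bsX)\right],
\]
and the point-wise error admits the explicit form
\[
    \error(\Gamma; \bsx) = \Prob(Y \notin \Gamma(\bsx) \mid \bsX = \bsx) = 1 - \sum_{\ell \in \Gamma(\bsx)} p_{\ell}(\bsx).
\]
Hence minimizing $\error(\Gamma)$ amounts to maximizing $\Exp_{\bsX}\bigl[\sum_{\ell \in \Gamma(\bsX)} p_{\ell}(\bsX)\bigr]$.

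Since the feasibility constraint $|\Gamma(\bsx)| \leq k$ is imposed separately at every point, the problem splits, and it suffices to maximize, for each fixed $\bsx$, the quantity $\sum_{\ell \in S} p_{\ell}(\bsx)$ over all subsets $S \subseteq [L]$ with $|S| \leq k$. Because the weights $p_{\ell}(\bsx)$ are non-negative, a rearrangement (greedy) argument settles this: replacing an index $\ell \in S$ whose weight is smaller than some $\ell' \notin S$ by $\ell'$ never decreases the sum, and filling the full budget of $k$ labels is never harmful since every weight is $\geq 0$. Thus the maximum is attained precisely by the indices of the $k$ largest values among $p_1(\bsx), \ldots, p_L(\bsx)$, that is, by $S = \Top_{\bsp}(\bsx, k)$.

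It then remains to glue the point-wise optimizers back together. Setting $\Gamma^*_k(\bsx) \eqdef \Top_{\bsp}(\bsx, k)$, for any feasible $\Gamma$ we have $\error(\Gamma; \bsx) \geq \error(\Gamma^*_k; \bsx)$ for every $\bsx$, and taking $\Exp_{\bsX}$ preserves this inequality; hence $\Gamma^*_k$ solves~\eqref{eq:top_k}.

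The main obstacle I anticipate is well-posedness rather than the optimization itself: one must verify that $\bsx \mapsto \Top_{\bsp}(\bsx, k)$ is a legitimate (measurable) set-valued map, so that $\error(\Gamma^*_k)$ is defined, and that the greedy selection is unambiguous. Ties in the values $p_{\ell}(\bsx)$ make the maximizing set non-unique and could obstruct measurability of a fixed tie-breaking rule; this is exactly where Assumption~\ref{ass:continuity_global} enters, since non-atomicity of the $p_{\ell}(\bsX)$ forces such ties onto a $\Prob_{\bsX}$-null set. Consequently the top-$k$ set is almost surely unique, and $\Gamma^*_k$ is well-defined, deterministic, and optimal.
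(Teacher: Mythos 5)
Your proof is correct and follows exactly the point-wise decomposition the paper itself uses for the analogous results (e.g.\ the proof of Lemma~\ref{lem:hard_coverage_control_oracle}): write $\error(\Gamma)=\Exp_{\bsX}[1-\sum_{\ell\in\Gamma(\bsX)}p_\ell(\bsX)]$, note the constraint is imposed point-wise, and optimize each $\bsx$ separately by a greedy/rearrangement argument. The paper in fact omits a proof of this particular lemma as immediate; your only superfluous step is invoking Assumption~\ref{ass:continuity_global}, which the lemma does not assume and which is not needed here, since the statement only claims that \emph{some} top-$k$ selection is optimal and any measurable tie-breaking rule yields an optimizer.
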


Top-$k$ classifier is the most straightforward approach to set-valued classification and provides an exact control on the point-wise set size.
This simplicity is at the price of a lack of adaptation to the heterogeneity of the problem.
Indeed, top-$k$ outputs $k$ labels for every $\bsx \in \X$ independently of the level of ambiguity of the samples.
Furthermore, the exact value of $k$ to use might not always be clear beforehand and might require a posteriori refinement.
Finally, this framework does not provide any control over the error rate.

\myparagraph{Estimation}
To estimate this set-valued classifier, a single labeled training dataset is required, no additional data is necessary.
In particular, the plug-in top-$k$ classifier is defined element-wise by
\begin{align*}
  \forall \bsx \in \X, \quad \hat{\Gamma}_k(\bsx) = \Top_{\hat{\bsp}}(\bsx, k) .
\end{align*}
Note that in fact this estimator does not require to estimate exactly the conditional probability $\bsp(\bsx)$ as any scoring function preserving the ranking of $\bsp(\bsx)$ will give exactly the same classifier.


\myparagraph{Bibliographic references}
This formulation has long ago been used to compare models in practice. \citet{lapin2015top} address the~\myref{eq:top_k} formulation by providing a surrogate hinge loss. In particular, they illustrated the superiority of their modified top-$k$ procedure as compared to One-Versus-All SVMs.
Several works followed dealing with convex surrogates for this problem.
\citet{Lapin2016} proposed a new hinge surrogate of the $0/1$-loss as well as a modified cross-entropy loss, and, following the classical results of~\citet{zhang2004statistical, Bartlett_Jordan_McAuliffe06}, they defined the notion of ``top-$k$ calibration'', a property that is satisfied by their introduced loss functions, see also~\citet{Yang2020}.
\citet{Berrada2018} modified the loss proposed by \citet{lapin2015top} in order to provide a smooth loss function better adapted for neural networks and showed that the resulting method achieves state-of-the-art performance on classical image datasets.


From theoretical point of view~\cite{Titouan20} proved a bound on the error rate of the top-$k$ set-valued classifier based on the $\ell_1$-estimation error of the conditional probabilities.

\subsubsection{Point-wise error control:}
\label{subsec:point_error}
Another formulation consists in minimizing the average set size under a point-wise error rate constraint
\usetagform{bold}
\begin{equation}
    \label{eq:point_wise_error}
    \tag{point-wise error control}
    \begin{aligned}
        \Gamma^*_\epsilon \in \argmin_\Gamma \; &  \Exp_{\bsX} |\Gamma(\bsX)| \\
        \text{s.t. } & \Prob(Y \notin \Gamma(\bsX) \mid \bsX = \bsx) \leq \epsilon,  \text{ a.s. } \Prob_{\bsX}
    \end{aligned}
\end{equation}
\usetagform{default}
We remark that in this case $\constraint = \enscond{\Gamma}{\Prob(Y \notin \Gamma(\bsX) \mid \bsX = \bsx) \leq \epsilon, \text{ a.s. } \Prob_{\bsX}}$. Again, as in the previous paragraph the dependency on $\Prob_{\bsX}$ can be dropped.

The closed-form expression of $\Gamma^*_\epsilon$ is again rather intuitive. It is closely connected to the top-$k$ strategy, however, unlike previous section, where $k$ was fixed beforehand, the size of the output set can vary from one point to another in this case.
The classifier $\Gamma^*_\epsilon$ can be supported by the following intuition: ``Given point $\bsx$ the prediction $\Gamma^*_\epsilon(\bsx)$ consists of $k_{\epsilon}(\bsx)$ most probable classes, such that the cumulative probability of $(L - k_{\epsilon}(\bsx))$ not included classes is at most $\epsilon$''.

\begin{lemma}
    \label{lem:hard_coverage_control_oracle}
    For every $\epsilon$ an optimal set-valued classifier $\Gamma^*_\epsilon$ can be obtained for all $\bsx \in \mathcal{X}$ as
    \begin{align*}
      \Gamma^*_\epsilon(\bsx) = \Top_{\bsp}(\bsx, k_\epsilon)\enspace,
    \end{align*}
    where $k_\epsilon = k_\epsilon(\bsx)$ is such that
    \begin{align*}
        k_\epsilon(\bsx) = \min \enscond{k \in [L]}{\sum_{\ell = 1}^{k} p_{(\ell)}(\bsx) \geq 1 - \epsilon} \enspace.
    \end{align*}
\end{lemma}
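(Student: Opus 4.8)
The plan is to exploit the fact that both the objective $\Exp_{\bsX}|\Gamma(\bsX)|$ and the constraint $\error(\Gamma;\bsx) \leq \epsilon$ decouple across $\bsx$. Since the constraint is imposed for $\Prob_{\bsX}$-almost every $\bsx$ separately and the objective is simply $\Exp_{\bsX}[\size(\Gamma;\bsX)]$, a feasible classifier $\Gamma$ minimizes the expected size if and only if, for $\Prob_{\bsX}$-almost every $\bsx$, the set $\Gamma(\bsx)$ has minimal cardinality among all $S \subseteq [L]$ with $\Prob(Y \notin S \mid \bsX = \bsx) \leq \epsilon$. Thus it suffices to solve, for each fixed $\bsx$, the purely combinatorial problem of minimizing $|S|$ over $S \subseteq [L]$ subject to the point-wise error constraint, and then to assemble these point-wise minimizers into a (measurable) classifier attaining the optimum.

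Second, I would rewrite the point-wise constraint in terms of captured probability mass. Since $\Prob(Y \notin S \mid \bsX = \bsx) = \sum_{\ell \notin S} p_{\ell}(\bsx) = 1 - \sum_{\ell \in S} p_{\ell}(\bsx)$, the constraint is equivalent to $\sum_{\ell \in S} p_{\ell}(\bsx) \geq 1 - \epsilon$. The point-wise problem therefore becomes: find a subset of smallest cardinality whose total conditional mass is at least $1 - \epsilon$.

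Third, and this is the crux, I would establish via an exchange argument that, among all subsets of a fixed cardinality $k$, the top-$k$ set $\Top_{\bsp}(\bsx, k)$ maximizes $\sum_{\ell \in S} p_{\ell}(\bsx)$, with maximal value $\sum_{\ell = 1}^{k} p_{(\ell)}(\bsx)$ (writing $p_{(1)}(\bsx) \geq \cdots \geq p_{(L)}(\bsx)$ for the order statistics). The exchange step is standard: if a set of size $k$ omits a label of larger probability than one it contains, swapping them weakly increases the mass. Hence a feasible set of size $k$ exists if and only if $\sum_{\ell = 1}^{k} p_{(\ell)}(\bsx) \geq 1 - \epsilon$, so the smallest feasible cardinality is exactly $k_\epsilon(\bsx) = \min \enscond{k \in [L]}{\sum_{\ell = 1}^{k} p_{(\ell)}(\bsx) \geq 1 - \epsilon}$; this minimum is taken over a non-empty set because $\sum_{\ell = 1}^{L} p_{(\ell)}(\bsx) = 1 \geq 1 - \epsilon$, so $k_\epsilon(\bsx)$ is well defined, and the point-wise minimizer is $\Top_{\bsp}(\bsx, k_\epsilon(\bsx))$.

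Finally, I would conclude by integrating. The classifier $\Gamma^*_\epsilon(\bsx) = \Top_{\bsp}(\bsx, k_\epsilon(\bsx))$ satisfies the point-wise constraint by construction, and for any other feasible $\Gamma$ one has $|\Gamma(\bsx)| \geq k_\epsilon(\bsx) = |\Gamma^*_\epsilon(\bsx)|$ for a.e. $\bsx$, whence $\Exp_{\bsX}|\Gamma(\bsX)| \geq \Exp_{\bsX}|\Gamma^*_\epsilon(\bsX)|$. The main obstacle is the combinatorial exchange step together with the measurability check needed to turn the pointwise minimizers into a genuine classifier; ties among the values $p_{\ell}(\bsx)$ do not affect existence (any consistent tie-breaking works), but they would matter for uniqueness, which is where Assumption~\ref{ass:continuity_global} enters.
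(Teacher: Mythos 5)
Your proposal is correct and follows essentially the same route as the paper's proof: rewrite the point-wise error as $1-\sum_{\ell}p_{\ell}(\bsx)\ind{\ell\in\Gamma(\bsx)}$, observe that both objective and constraint decouple over $\bsx$, and reduce to the combinatorial problem of finding a minimal-cardinality subset capturing mass at least $1-\epsilon$. The paper leaves the final step as ``solving the above problem,'' whereas you make the exchange argument and the well-definedness of $k_\epsilon(\bsx)$ explicit; this is a faithful completion, not a different approach.
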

\begin{proof}
    Postponed to appendix
\end{proof}

Theoretically, this formulation provides strong guarantees on the point-wise error rate.
It is thus very relevant for scenarios where the error rate must be controlled precisely.
One major drawback of this formulation is that it does not provide any control over the set size and can thus predict large sets which can then be uninformative.
A practical limitation is that it is hard to measure the point-wise error rate and thus the theoretical guarantees are often violated in practice.

\myparagraph{Estimation}
To estimate this set-valued classifier, a single labeled training dataset is required, no additional data is necessary.
In particular, the plug-in set-valued classifier is defined element-wise by
\begin{align*}
  \forall \bsx \in \X, \quad \hat{\Gamma}_{\epsilon}(\bsx) = \Top_{\hat{\bsp}}(\bsx, \hat{k}_\epsilon(\bsx)) .
\end{align*}
with
\begin{equation*}
  \hat{k}_{\epsilon} (\bsx) = \min \enscond{ k \in [L] }{ \sum_{\ell = 1}^{k} {\hat{p}}_{(\ell)}(\bsx) \geq 1 - \epsilon }
\end{equation*}
where  ${\hat{p}}_{(\ell)}(\bsx)$ is a reordering of $\hat{p}_{1}(\bsx), \ldots, \hat p_L(\bsx)$ in decreasing order, \ie $\hat{p}_{(1)}(\bsx) \geq \ldots \geq {\hat{p}}_{(L)}(\bsx)$.
Note that this estimator requires to estimate exactly the conditional probability $\bsp(\bsx)$ as they serve to estimate the point-wise error rate on which we set our constraint.
The model must thus output calibrated probabilities which requires additional care during training or an extra post-processing step.

Note that due to the finite sample effects the point-wise error rate might actually exceed the desired level $\epsilon$.
If such a behaviour is undesirable, a simple practical remedy can be provided via the off-set strategy. This approach is exactly the same as described above, but with
\begin{equation}
\label{eq:correction_pointwise_error}
  \hat{k}_{\epsilon} (\bsx) = \min \enscond{ k \in [L] }{ \sum_{\ell = 1}^{k} {\hat{p}}_{(\ell)}(\bsx) \geq 1 - \epsilon + r_{n, L} }\enspace,
\end{equation}
where $r_{n, L}$ is some positive sequence of the size of the dataset and the number of classes. Intuitively, this strategy requires to pick a smaller value of $\epsilon$ for the estimator.
We illustrate this strategy on Figure~\ref{fig:c_pointwise-error-rate-constraint-estimation} with $r_{n, L} = \sqrt{\sfrac{L}{n}}$.

\myparagraph{Bibliographic references}
Until very recently, most references dealing with set-valued classifiers that satisfy the point-wise error rate constraint (most of the time referred to as {\it conditional validity}) is related to the regression setting. For instance,
\citep{Cai2014Confidence,Lei_Wasserman2014,LeiGsellRinaldoTibshiraniWasserman_16} proposed an asymptotic study of predictors based on the point-wise error rate constraint under smoothness condition on the regression function. In addition, \citet{barber2019limits} studied a relaxed version of the point-wise error rate requirement.
More recently~\citep{Gyorfi_Walk_20} established finite sample distribution-free control on the tail distribution on the point-wise error rate and on the size of the predictor. The latter reference also provided strong consistency results under smoothness conditions on the regression function.

One classical result when we deal with finite sample control on the point-wise error rate is due to~\citet[Lemma~1]{Lei_Wasserman2014} (extended to the classification setting by Vovk~\citep{vovk_13}) that establishes that no control on the point-wise error rate can hold (for continuous marginal $\Prob_{\bsX}$) in a distribution-free setting unless with trivial set-valued predictors.

According to the classification setting, Vovk~\citep{vovk_13} provided a conformal predictor type algorithm to ensure the point-wise error rate validity on the training sample.
Recently~\citep{Gyorfi_Walk_20}, proposed a $k$NN based set-valued classifier and derived, under smoothness conditions on the $p_{\ell}$'s, finite sample bounds on the conditional error rate and on the point-wise size of the provided set. About the same time~\citep{Romano_Sesia_Candes20} developed a method that also attempt to get approximate conditional error rate guarantee.


\subsection{Average control -- constraints in expectation}
\label{sec:Soft_constraint_types}
Unlike standard top-$k$ predictions, general set-valued classification framework allows varying sizes of the set-valued classifier $\Gamma$, depending on the ambiguity in the distribution.
A natural methodological approach to build suitable set-valued classifiers lies in the minimization of the penalized risk that takes into account both risk and size.
That is, for $ \lambda >0$, we set
\usetagform{bold}
\begin{align}
\label{eq:penalized}
\tag{penalized version}
    \Gamma^*_{ \lambda } \in \argmin_\Gamma \; & \Prob(Y \notin \Gamma(\bsX) ) +  \lambda \  {\Exp_{\bsX}}|\Gamma(\bsX)|\enspace.
\end{align}
\usetagform{default}
This framework is among the first to be proposed and analyzed in the literature by \citet{Ha1996,Ha1997,Ha1997b}.
It is also known as \emph{class-selective rejection} \citep{Ha1996} or \emph{class-selection} \citep{LeCapitaine2014}.
It can be seen as a generalization of classification with reject option to the multi-class setting~\cite{Herbei_Wegkamp06}.
To the best of our knowledge, from a statistical learning perspective, this framework has not been much studied.
In particular, we have not found any study of empirical risk minimization procedures, nor proposal of surrogate losses for this risk.
Notably, the set-valued prediction $\Gamma^*_{\lambda}$ admits a closed-form solution written in a simple thresholding form.
\begin{lemma}
    \label{lem:penalized_oracle}
    For all $\lambda \geq 0$ and all $\bsx \in \class{X}$ it holds that
    \begin{align*}
        \Gamma^*_{\lambda}(\bsx) = \enscond{\ell \in [L]}{p_{\ell}(\bsx) \geq \lambda}\enspace.
    \end{align*}
\end{lemma}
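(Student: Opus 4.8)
The plan is to reduce the minimization of the penalized risk to a pointwise optimization: I will rewrite the objective as an expectation over $\bsX$ of a criterion that, for each fixed $\bsx$, depends on $\Gamma$ only through the single set $\Gamma(\bsx)$, and then optimize this criterion separately at every point.

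First I would condition on $\bsX$ and use the definition of the posterior $\bsp$. By the tower property,
\begin{align*}
    \Prob(Y \notin \Gamma(\bsX)) = \Exp_{\bsX}\left[1 - \sum_{\ell \in \Gamma(\bsX)} p_{\ell}(\bsX)\right], \qquad \Exp_{\bsX}|\Gamma(\bsX)| = \Exp_{\bsX}\left[\sum_{\ell \in \Gamma(\bsX)} 1\right].
\end{align*}
Adding the two with weight $\lambda$ and collecting terms label by label gives
\begin{align*}
    \Prob(Y \notin \Gamma(\bsX)) + \lambda\, \Exp_{\bsX}|\Gamma(\bsX)| = \Exp_{\bsX}\left[1 - \sum_{\ell \in \Gamma(\bsX)} \big(p_{\ell}(\bsX) - \lambda\big)\right].
\end{align*}

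Second, since the penalized formulation carries no constraint ($\constraint$ is N/A) and $\Gamma$ ranges over all measurable set-valued maps, minimizing this expectation is equivalent to minimizing its integrand for $\Prob_{\bsX}$-almost every $\bsx$. For a fixed $\bsx$, the quantity $\sum_{\ell \in \Gamma(\bsx)}(p_{\ell}(\bsx) - \lambda)$ is a sum of per-label contributions whose inclusion can be decided independently, so it is maximized --- and hence the integrand minimized --- exactly when $\Gamma(\bsx)$ collects all labels with nonnegative contribution, that is $\Gamma(\bsx) = \enscond{\ell \in [L]}{p_{\ell}(\bsx) \geq \lambda}$, which is the claimed form.

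Finally, I would address the only delicate point, namely the boundary labels with $p_{\ell}(\bsx) = \lambda$: including or excluding them leaves the objective unchanged, so either one adopts the convention of including them (yielding the stated ``$\geq$''), or one invokes Assumption~\ref{ass:continuity_global}, under which $\Prob_{\bsX}(p_{\ell}(\bsX) = \lambda) = 0$ and the minimizer is therefore $\Prob_{\bsX}$-a.s. unique and deterministic. I expect no genuine obstacle here; the argument becomes immediate once the objective is put in pointwise-separable form. The only things worth a sentence are the measurability of the map $\bsx \mapsto \enscond{\ell}{p_{\ell}(\bsx) \geq \lambda}$ and the standard fact that pointwise minimization of a measurable integrand indeed produces a minimizer of the expectation.
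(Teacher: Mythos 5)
Your argument is correct and is essentially identical to the paper's own proof: the paper likewise rewrites the penalized objective as $1 + \Exp_{\bsX}\sum_{\ell = 1}^L (\lambda - p_{\ell}(\bsX))\ind{\ell \in \Gamma(\bsX)}$ and minimizes it point-wise, including exactly the labels with $p_{\ell}(\bsx) \geq \lambda$. Your additional remarks on tie-breaking at $p_{\ell}(\bsx) = \lambda$ and on measurability are sound refinements that the paper leaves implicit.
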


When the cost/threshold $\lambda$ is fixed, this formulation is very simple.
However, choosing the appropriate value for $\lambda$ beforehand can be hard in practice.
In particular, the way this parameter controls the error rate and the set size is not explicit.
To overcome this difficulty, the next formulations control these quantities explicitly.

This framework also admits a simple estimation strategy.
In particular, the plug-in set-valued classifier is defined element-wise by
\begin{align*}
  \forall \bsx \in \X, \quad \hat{\Gamma}_{\lambda}(\bsx) = \enscond{\ell \in [K]}{ \hat{p}_\ell(\bsx) \geq \lambda} .
\end{align*}
Also, in general, as this formulation is expressed as an unconstrained risk minimization problem, the decision rule can be learned directly using empirical risk minimization approaches removing the need to estimate the conditional probability explicitly, much alike traditional classification methods.

\subsubsection{Average set size control:}
\label{subsec:average_size}
Instead of the above penalized version of the problem, one can consider its constrained counterparts, for instance, minimizing the error given a constraint on the expected set size
\usetagform{bold}
\begin{equation}
    \label{eq:average_size}
    \tag{average size control}
    \begin{aligned}
    \Gamma^*_{\bar{k}} \in \argmin_\Gamma \Prob(Y \notin \Gamma(\bsX) ) \\
    \phantom{\Gamma^*_{\bar{k}} \in \argmin_\Gamma}\text{s.t. } \Exp_{\bsX} |\Gamma(\bsX)| \leq \bar{k} \enspace.
    \end{aligned}
\end{equation}
\usetagform{default}
For this framework the set $\constraint = \enscond{\Gamma}{\Exp_{\bsX} |\Gamma(\bsX)| \leq \bar{k}}$.
Note how in this case, unlike previous constrained formulations, the set $\constraint$ unavoidably depends on the marginal distribution $\Prob_{\bsX}$ of $\bsX$. In particular, it is, in general, impossible to say a priori if a given set-valued classifier $\Gamma$ belongs to this set.

It is common that the constrained versions of optimization problems are in some sense equivalent to its penalized counterparts.
This phenomena occurs here as well following the next result of~\citet{Denis2017}.
\begin{lemma}
Let Assumption~\ref{ass:continuity_global} be satisfied. Fix $\bar{k} \in (0, K)$ and define
\begin{align}
    \label{eq:G_func}
    G(t) = \sum_{\ell = 1}^L \Prob(p_{\ell}(\bsX) \geq t)\enspace.
\end{align}
Then, an optimal set-valued classifier $\Gamma^*_{\bar{k}}$ can be obtained for all $\bsx \in \class{X}$ as
\begin{align*}
    \Gamma_{\bar{k}}^*(\bsx) = \enscond{\ell \in [L]}{p_{\ell}(\bsx) \geq G^{-1}(\bar{k})}\enspace,
\end{align*}
where $G^{-1}(\cdot)$ is the generalized inverse of $G(\cdot)$.
\end{lemma}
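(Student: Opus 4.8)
The plan is to solve this constrained problem by a Neyman--Pearson / Lagrangian thresholding argument. First I would rewrite both functionals in terms of the membership indicators $\gamma_\ell(\bsx) = \ind{\ell \in \Gamma(\bsx)}$. By conditioning on $\bsX$,
\[
  \error(\Gamma) = 1 - \Exp_{\bsX}\Big[\sum_{\ell=1}^L \gamma_\ell(\bsX)\, p_\ell(\bsX)\Big],
  \qquad
  \size(\Gamma) = \Exp_{\bsX}\Big[\sum_{\ell=1}^L \gamma_\ell(\bsX)\Big],
\]
so minimizing $\error$ subject to $\size(\Gamma) \le \bar{k}$ is the same as maximizing the coverage $\Exp_{\bsX}[\sum_\ell \gamma_\ell(\bsX)\,p_\ell(\bsX)]$ under the same budget. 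The crucial observation is that the thresholding rule $\Gamma_\theta(\bsx) = \enscond{\ell \in [L]}{p_\ell(\bsx) \ge \theta}$ has expected size $\size(\Gamma_\theta) = \sum_{\ell=1}^L \Prob(p_\ell(\bsX) \ge \theta) = G(\theta)$, i.e. the budget of a thresholding classifier is read off directly from the function $G$ in the statement.

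Next I would fix the threshold $\theta := G^{-1}(\bar{k})$ and check feasibility with the constraint saturated. Under Assumption~\ref{ass:continuity_global} each $p_\ell(\bsX)$ is non-atomic, so each survival function $t \mapsto \Prob(p_\ell(\bsX) \ge t)$ is continuous; hence $G$ is continuous and non-increasing. Continuity forces $G(\theta) = \bar{k}$ exactly (the level $\bar{k}$ lies in the range of $G$ since $G \equiv L$ on $(-\infty,0]$ and decreases to $0$ for $t > 1$, while $\bar{k} < L$), so $\size(\Gamma_\theta) = \bar{k}$ and $\Gamma_\theta = \Gamma^*_{\bar{k}}$ is feasible. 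The same chain gives $\theta \ge 0$: since $G(t) = L > \bar{k}$ for all $t \le 0$, the infimum defining $G^{-1}(\bar{k})$ cannot be negative.

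For optimality I would use the pointwise exchange inequality. Writing $\gamma^*_\ell(\bsx) = \ind{p_\ell(\bsx) \ge \theta}$, for every $\bsx$ and $\ell$ one has
\[
  \big(\gamma^*_\ell(\bsx) - \gamma_\ell(\bsx)\big)\big(p_\ell(\bsx) - \theta\big) \ge 0,
\]
which is checked by splitting into the cases $p_\ell(\bsx) \ge \theta$ (then $\gamma^*_\ell = 1 \ge \gamma_\ell$) and $p_\ell(\bsx) < \theta$ (then $\gamma^*_\ell = 0 \le \gamma_\ell$). Summing over $\ell$, taking $\Exp_{\bsX}$, and rearranging yields, for any feasible $\Gamma$,
\[
  \Exp_{\bsX}\Big[\sum_{\ell=1}^L \big(\gamma^*_\ell(\bsX) - \gamma_\ell(\bsX)\big)p_\ell(\bsX)\Big]
  \;\ge\; \theta\big(\size(\Gamma^*_{\bar{k}}) - \size(\Gamma)\big) \;\ge\; 0,
\]
using $\theta \ge 0$ and $\size(\Gamma^*_{\bar{k}}) = \bar{k} \ge \size(\Gamma)$. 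Thus $\Gamma^*_{\bar{k}}$ has coverage no smaller than that of any feasible classifier, equivalently $\error(\Gamma^*_{\bar{k}}) \le \error(\Gamma)$, which is optimality.

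The routine reformulation and the pointwise inequality carry no difficulty; I expect the one delicate point to be the boundary behaviour of the constraint, namely establishing that $G(\theta) = \bar{k}$ exactly rather than merely $\le \bar{k}$, and that ties $\{p_\ell(\bsX) = \theta\}$ carry zero mass. Both hinge entirely on the non-atomicity of Assumption~\ref{ass:continuity_global}; without it $G$ may jump across $\bar{k}$ and one would be forced to randomize on the ties to saturate the expected-size budget, yielding the optimal rule only among randomized classifiers.
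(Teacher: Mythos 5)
Your proof is correct, but it takes a genuinely different route from the paper's. The paper does not prove this lemma directly (it attributes the result to Denis and Hebiri), but its own proofs of the closely analogous statements (notably Lemma~\ref{lem:hybrid_size_control_oracle}) proceed by Lagrangian weak duality: lower-bound the primal by $\max_{\lambda}\min_{\Gamma}$, solve the inner minimization to obtain a $\lambda$-thresholding rule, write the KKT conditions in terms of $G$, use Assumption~\ref{ass:continuity_global} to solve them as $\lambda^*=G^{-1}(\bar k)$ with $\tau^*=0$, and close the duality gap by checking that the resulting classifier is primal-feasible with $\size(\Gamma_{\lambda^*})=G(G^{-1}(\bar k))=\bar k$. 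You instead fix the threshold $\theta=G^{-1}(\bar k)$ up front and prove optimality by the pointwise Neyman--Pearson exchange inequality $(\gamma^*_\ell-\gamma_\ell)(p_\ell-\theta)\ge 0$, combined with $\theta\ge 0$ and saturation of the size budget. Both arguments lean on exactly the same facts in exactly the same place --- continuity of $G$ under non-atomicity to guarantee $G(\theta)=\bar k$ rather than merely $G(\theta)\le\bar k$, and nonnegativity of the threshold --- and you correctly flag these as the delicate points. What each buys: your exchange argument is more elementary and entirely self-contained (no duality, no subdifferential computation), whereas the paper's Lagrangian machinery extends mechanically to problems with several simultaneous constraints, which is why it is the template reused for the hybrid formulations in Section~\ref{sec:hybrids}. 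One cosmetic remark: the statement's $\bar k\in(0,K)$ is the paper's recurring $K$/$L$ typo, and your reading $\bar k<L$ is the intended one.
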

Note that if we pick the penalization parameter $\lambda = G^{-1}(\bar{k})$, then the~\myref{eq:penalized} yields the same optimal set-valued classifier as the one resulting from the present formulation. Yet, the choice $\lambda = G^{-1}(\bar{k})$ cannot be made without additional data.

This formulation can be seen as an adaptive version of the \myref{eq:top_k} formulation which is able handle the heterogeneity of the task ambiguity.
As a consequence, the present optimal set-valued classifier yields a smaller error rate than that of top-$k$ classifier (since top-$k$ is feasible for the \myref{eq:average_size} formulation with $\bar k = k$).
On the other hand, the threshold $G^{-1}(\bar{k})$ is distribution dependent and thus relies on the marginal distribution $\Pr_{\bsX}$.
It should be noted that this formulation does not provide an explicit control over the error rate which can be potentially large for individual samples.

\myparagraph{Estimation}
To estimate this set-valued classifier, an additional unlabeled dataset can be used to estimate the threshold $G^{-1}(\bar k)$.
Indeed, the constraint involved in this set-valued classifier relies on the marginal distribution $\Pr_{\bsX}$, and, in principle, unlabeled data would allow to estimate this constraint.
Denote by $\bsX'_1, \ldots, \bsX'_N$ the unlabeled dataset of size $N$ sampled independently from $\Pr_{\bsX}$ and define the following function
\begin{align}
    \label{eq:estimationCDFAverageSize}
  \forall t \in [0,1], \quad
  \hat{G}(t) = \frac{1}{N} \sum_{i=1}^{N} \sum_{\ell = 1}^L \ind{ \hat{p}_\ell(\bsX'_i) \geq t} .
\end{align}
This function is essentially the empirical counterpart of the function $G$ from Eq.~\eqref{eq:G_func} computed using the unlabeled dataset and for the estimator $\hat{\bsp}$.
The plug-in set-valued classifier is then defined element-wise by
\begin{align*}
  \forall \bsx \in \X, \quad
  \hat{\Gamma}_{\bar{k}}(\bsx) = \enscond{\ell \in [L]}{ \hat{p}_\ell(\bsx) \geq \hat{G}^{-1}(\bar{k}) } ,
\end{align*}
where $\hat{G}^{-1}$ is the generalized inverse of $\hat{G}$.

\myparagraph{Bibliographic references}
The present framework was introduced by \citet{Denis2017}, where the authors proposed a semi-supervised procedure based on empirical risk minimization. They derived rates of convergence under smoothness conditions on the conditional probabilities.
The idea to use unlabeled data to build set-valued classifiers was first discovered and developed by~\citet{Denis2017}, where the authors proposed a two step empirical risk minimization procedure and derived rates of convergence.
Later, \citet{Chzhen2019} developed a minimax analysis of this framework and derived optimal rates of convergence for a semi-supervised approach based on plug-in under smoothness conditions.
They showed the superiority of semi-supervised approaches over their supervised counterparts in certain situations.

\subsubsection{Average coverage control:}
\label{subsec:average_error}
The previous approach attempt to find the best set-valued classifier among those that have a desired average size.
Yet, doing so does not actually give any guarantees on the actual coverage, apart from being the smallest in the set of classifiers of interest.
Alternatively, we can target the minimization of the expected set size given constraint on the error rate.
This problem can be formulated as follow
\usetagform{bold}
\begin{equation*}
    \label{eq:average_error}
    \tag{average error control}
    \begin{aligned}
        \Gamma^*_{\bar{\epsilon}} \in \argmin_\Gamma \; &  \Exp_{\bsX} |\Gamma(\bsX)| \\
        \text{s.t. } & \Prob(Y \notin \Gamma(\bsX)) \leq \bar{\epsilon}
        \enspace .
    \end{aligned}
\end{equation*}
\usetagform{default}
For this framework the set $\constraint = \enscond{\Gamma}{\Prob(Y \notin \Gamma(\bsX)) \leq \bar{\epsilon}}$ and it unavoidably depends on the whole joint distribution $\Prob$ of $(\bsX, Y)$.

Besides, again, this constrained formulation is closely tied to the penalized version, thanks to the following result of~\citet{sadinle2019least} which characterizes the optimal set-valued classifier $\Gamma^*_{\bar{\epsilon}}$.
\begin{lemma}
Let Assumption~\ref{ass:continuity_global} be satisfied. Fix $\bar{\epsilon} \in (0, 1)$ and define
\begin{align}
    \label{eq:H_func}
    H(t) = \Prob(p_{Y}(\bsX) \geq t)\enspace.
\end{align}
Then, an optimal set-valued classifier $\Gamma^*_{\bar{\epsilon}}$ can be obtained for all $\bsx \in \class{X}$ as
\begin{align*}
    \Gamma_{\bar{\epsilon}}^*(\bsx) = \enscond{\ell \in [L]}{p_{\ell}(\bsx) \geq H^{-1}(1 - {\bar{\epsilon}})}\enspace,
\end{align*}
where $H^{-1}(\cdot)$ is the generalized inverse of $H(\cdot)$.
\end{lemma}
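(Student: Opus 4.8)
The plan is to reduce the constrained \myref{eq:average_error} problem to the penalized problem \myref{eq:penalized}, whose solution is already available in closed form via Lemma~\ref{lem:penalized_oracle}, and then to close the argument with a Lagrangian comparison. The first step is to record the average error of a pure thresholding rule. For a threshold $t \in [0,1]$, write $\Gamma_t(\bsx) = \enscond{\ell \in [L]}{p_\ell(\bsx) \geq t}$. Since the event $Y \notin \Gamma_t(\bsX)$ is exactly the event $p_Y(\bsX) < t$, the average error satisfies $\error(\Gamma_t) = \Prob(p_Y(\bsX) < t) = 1 - H(t)$, where $H$ is the monotone decreasing function from Eq.~\eqref{eq:H_func}. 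Thus the thresholding family sweeps out precisely the error levels $1 - H(t)$ as $t$ ranges over $[0,1]$.

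Next I would pick the threshold that activates the constraint, namely $\lambda^* = H^{-1}(1 - \bar{\epsilon})$. Under Assumption~\ref{ass:continuity_global} each $p_\ell(\bsX)$ is non-atomic, which forces $H$ to be continuous: writing $H(t) = \sum_\ell \Exp[p_\ell(\bsX)\ind{p_\ell(\bsX) \geq t}]$, every summand is continuous in $t$ because its jump at any $t_0$ equals $t_0\,\Prob(p_\ell(\bsX) = t_0) = 0$. Continuity of $H$ then yields $H(\lambda^*) = 1 - \bar{\epsilon}$, so the candidate classifier $\Gamma^*_{\bar{\epsilon}} = \Gamma_{\lambda^*}$ is feasible with the error constraint met \emph{with equality}: $\error(\Gamma^*_{\bar{\epsilon}}) = 1 - H(\lambda^*) = \bar{\epsilon}$. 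Moreover, since $\bar{\epsilon} < 1$ and $H(0) = 1 > 1 - \bar{\epsilon}$, continuity also gives $\lambda^* > 0$, a fact needed below.

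The optimality step compares an arbitrary feasible competitor against the penalized optimum. By Lemma~\ref{lem:penalized_oracle}, the rule $\Gamma_{\lambda^*}$ minimizes the penalized objective $\error(\Gamma) + \lambda^* \size(\Gamma)$ over all set-valued classifiers. Hence, for any feasible $\Gamma$, i.e.\ any $\Gamma$ with $\error(\Gamma) \leq \bar{\epsilon}$, we have $\error(\Gamma^*_{\bar{\epsilon}}) + \lambda^* \size(\Gamma^*_{\bar{\epsilon}}) \leq \error(\Gamma) + \lambda^* \size(\Gamma)$. Rearranging and inserting $\error(\Gamma^*_{\bar{\epsilon}}) = \bar{\epsilon}$ together with $\error(\Gamma) \leq \bar{\epsilon}$ gives $\lambda^*\bigl(\size(\Gamma^*_{\bar{\epsilon}}) - \size(\Gamma)\bigr) \leq \error(\Gamma) - \bar{\epsilon} \leq 0$. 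Since $\lambda^* > 0$, this forces $\size(\Gamma^*_{\bar{\epsilon}}) \leq \size(\Gamma)$, which establishes that $\Gamma^*_{\bar{\epsilon}}$ solves \myref{eq:average_error}.

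The main obstacle is the step that makes the constraint active, i.e.\ the exact identity $H(\lambda^*) = 1 - \bar{\epsilon}$; this is precisely where Assumption~\ref{ass:continuity_global} enters, ruling out jumps of $H$ that would otherwise leave a gap between the achievable error and $\bar{\epsilon}$ and break the clean cancellation in the Lagrangian inequality (absent continuity one is forced into randomized classifiers, as in~\citep{Sadinle2019}). A secondary point to check is $\lambda^* > 0$, without which the final inequality would be vacuous; this again follows from continuity together with $\bar{\epsilon} < 1$.
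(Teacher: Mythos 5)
Your proof is correct. Note that the paper does not actually supply a proof of this lemma: it is imported from \citet{sadinle2019least}, and the appendix only proves the hybrid analogue (Lemma~\ref{lem:size_size_hybrid}) by writing down the Lagrangian, invoking weak duality, and solving the KKT system for the multiplier. Your argument is the same duality idea in a more economical form: rather than re-deriving the inner minimization and the KKT conditions, you reuse Lemma~\ref{lem:penalized_oracle} as a black box for the penalized problem, identify the multiplier as $\lambda^* = H^{-1}(1-\bar{\epsilon})$, and close with the standard exchange inequality $\lambda^*\bigl(\size(\Gamma^*_{\bar{\epsilon}})-\size(\Gamma)\bigr)\leq \error(\Gamma)-\bar{\epsilon}\leq 0$. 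What your route buys is that it makes explicit the two facts the paper's treatment leaves implicit and that are exactly where Assumption~\ref{ass:continuity_global} is consumed: the identity $H(t)=\sum_{\ell}\Exp[p_{\ell}(\bsX)\ind{p_{\ell}(\bsX)\geq t}]$ showing $H$ is continuous (so the constraint is active, $H(\lambda^*)=1-\bar{\epsilon}$), and the positivity $\lambda^*>0$ from $H(0)=1>1-\bar{\epsilon}$, without which the final cancellation would be vacuous. The KKT route of the appendix generalizes more readily to the hybrid settings with an extra almost-sure constraint, whereas your comparison argument is cleaner for this single-constraint case.
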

Again we conclude that setting $\lambda = H^{-1}(1 - {\bar{\epsilon}})$ in the~\myref{eq:penalized} is equivalent to controlling the average error rate on the level $\bar\epsilon$.

The present framework provides guarantees over the error rate.
These guarantees are weaker theoretically than those of \myref{eq:point_wise_error} and can result in point-wise high error rates.
However, this constraint is easier to enforce and control in practice as the average error rate, unlike its point-wise counterpart, can be measured on a labeled dataset.
We additionally emphasize that the constraint $\Prob(Y \notin \Gamma(\bsX)) \leq \bar{\epsilon} $ actually depends on the whole joint distribution $\Prob$, which stays in contrast with \myref{eq:average_size} formulation.

\myparagraph{Estimation}
To estimate this set-valued classifier, an additional labeled dataset (or subsampling) can be used to fit the threshold.
In particular, denote $(\bsX'_1, Y'_1), \ldots, (\bsX'_{n'}, Y'_{n'})$ this labeled dataset of size $n'$ sampled independently from $\Pr$ and define the following function
\begin{align}
\label{eq:estimationErrorAverageError}
  \forall t \in [0,1], \quad
  \hat{H}(t) = \frac{1}{n'} \sum_{i=1}^{n'} \sum_{\ell=1}^L \ind{ Y'_i = \ell } \; \ind{ \hat{p}_\ell (\bsX'_i) \geq t } .
\end{align}
This function is essentially the empirical counterpart of the function $H$ in Eq.~\eqref{eq:H_func} computed using the labeled dataset and for the estimator $\hat{\bsp}$.
The plug-in set-valued classifier is then defined point-wise by
\begin{align*}
    \forall \bsx \in \X, \quad
    \hat{\Gamma}_{\bar{\epsilon}}(\bsx) = \enscond{\ell \in [L]}{ \hat{p}_\ell(\bsx) \geq \hat{H}^{-1}(1 - {\bar{\epsilon}}) } ,
\end{align*}
where $\hat{H}^{-1}$ is the generalized inverse of $\hat{H}$.

\myparagraph{Bibliographic references}
\citet{Lei2014} studied a similar framework in the context of binary classification with reject option.
This framework was later extended to the multi-class classification framework by \citet{Sadinle2019} which provided the expression of the optimal set-valued classifier.
They also studied convergence properties of the plug-in set-valued classifier under some smoothness assumptions on $\Pr$.
The origin of this framework is rooted in \emph{conformal prediction} literature \citep{Vovk2005} and \citet{Sadinle2019} derives a classification procedure based on the ideas of conformal prediction theory.
\section{Empirical comparison of the frameworks}
\label{sec:numerical}

In this section, we carry out experiments on real-world datasets to analyze and compare the properties of the previously described formulations. We aim at highlighting positive and negatives aspects of the frameworks described in the previous section.




\subsection{Constraint satisfiability in practice}

In this section, we analyze if the constraints defined in each framework are satisfied in practice.
Since $\Prob$ is unknown, for all frameworks, except \myref{eq:top_k} (top-$k$), we cannot guarantee that the constraint will be in a strict accordance with the theoretical requirement. Our conclusions in this section will then be based on empirical evaluation of the different constraints.

We perform our experiments on \texttt{ImageNet} \citep{Russakovsky2015} dataset which contains 1.3 million images from 1,000 classes.
We split this dataset into a training set of 1.2 million images, a validation set of 50,000 images, and a test set of 100,000 images.
We remark that the labels of the official test set are not publicly available and it is thus common practice to use the official validation set as test set for experiments.
This validation set contains 50,000 images of the 1,000 classes.
It is balanced and thus contains 50 images per class.
Recall that the estimation procedures described in the previous section can be performed on top of any off-the-shelf score based classifier. For this reason, we use a pre-trained \texttt{ResNet-152} neural network \citep{He2016} from PyTorch model zoo\footnote{\url{https://pytorch.org/docs/stable/torchvision/models.html}}.
We then use the predictions made on the validation set to carry out the experiments in this section.


\begin{figure}[t]
  \centering
  \begin{subfigure}[b]{.32\textwidth}
    \includegraphics[width=\textwidth]{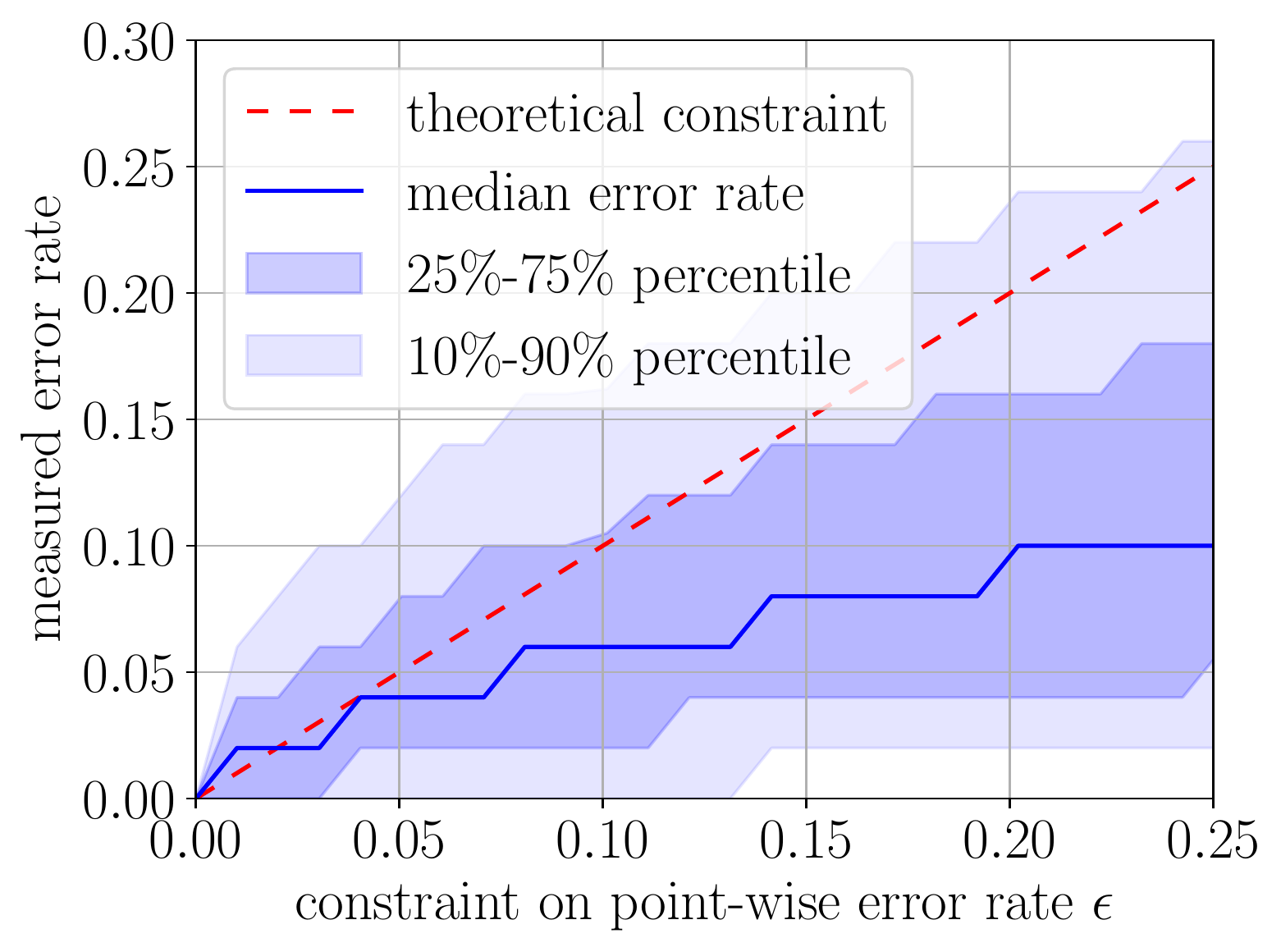}
    \caption{Original probabilities.}
    \label{fig:a_pointwise-error-rate-constraint-estimation}
  \end{subfigure}
  \hfill
  \begin{subfigure}[b]{.32\textwidth}
    \includegraphics[width=\textwidth]{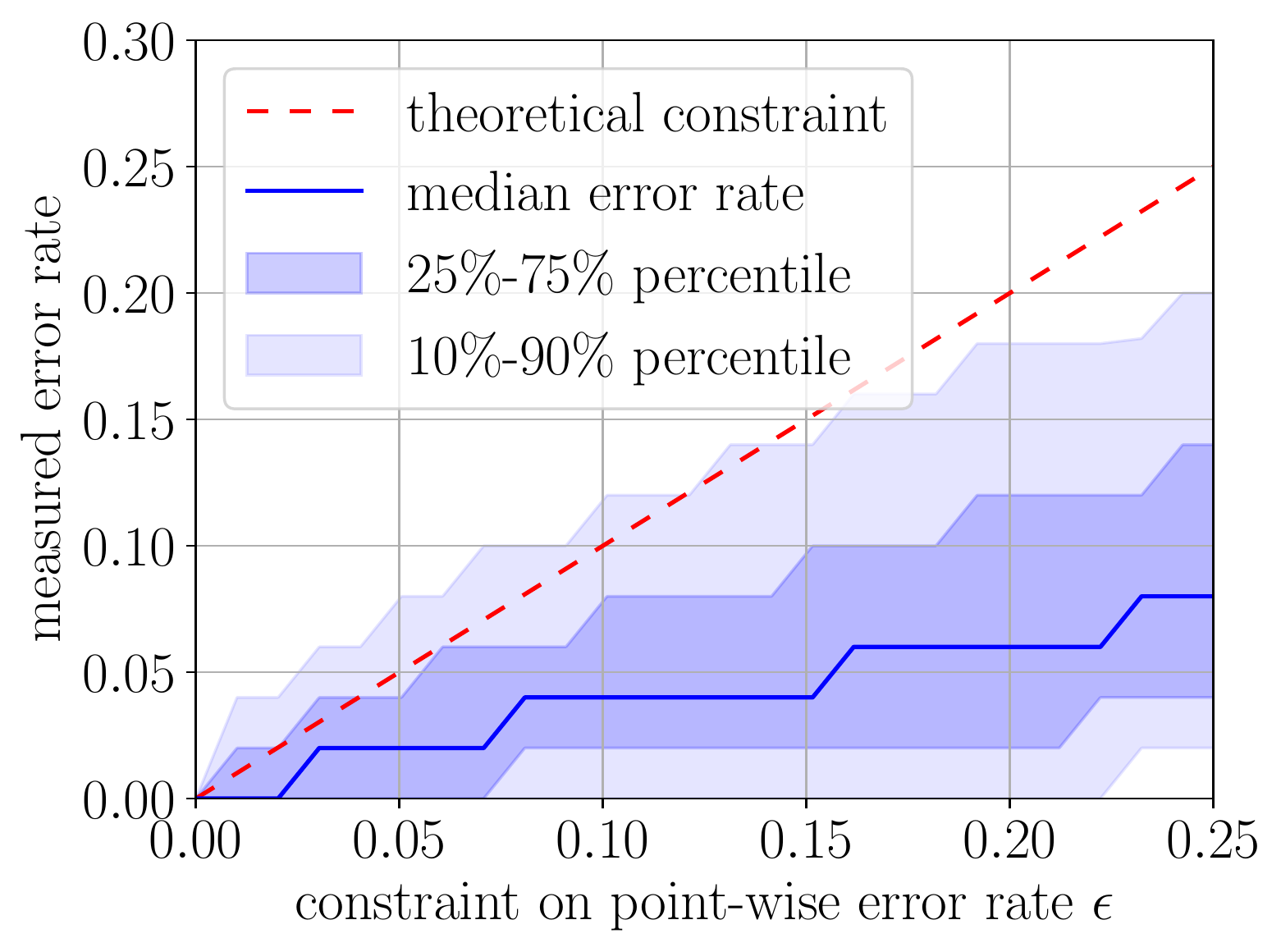}
    \caption{Calibrated probabilities.}
    \label{fig:b_pointwise-error-rate-constraint-estimation}
  \end{subfigure}
  \hfill
  \begin{subfigure}[b]{.32\textwidth}
    \includegraphics[width=\textwidth]{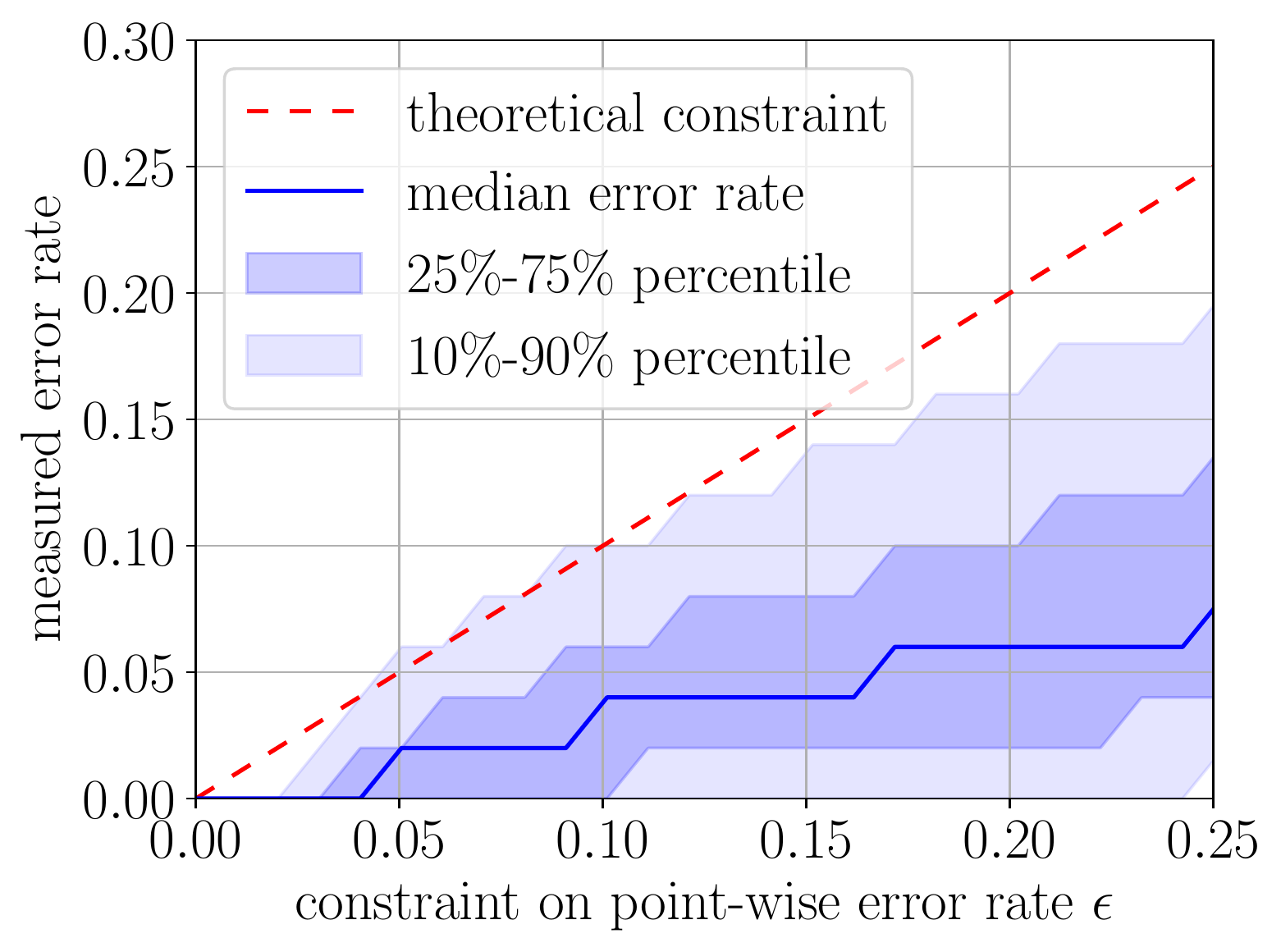}
    \caption{Prob. calib. + correction}
    \label{fig:c_pointwise-error-rate-constraint-estimation}
  \end{subfigure}
\caption{
  Plots assessing if the point-wise error rate constraint is violated on \dataset{ImageNet}.
  The distributions of the class error rates are computed for different values of point-wise error rate constraints.
  The median and the 10\%, 25\%, 75\%, and 90\% percentiles of the distribution of the class error rates are shown.
  If the constraint would have been satisfied, these curves would be below the red curve.
  These plots show that probability calibration is important for the \myref{eq:point_wise_error} formulation, however, it does not guarantee that the constraint will be satisfied.
}
\label{fig:pointwise-error-rate-constraint-estimation}
\end{figure}

\subsubsection{Satisfiability of the point-wise error rate constraint}

We start with the \myref{eq:point_wise_error} framework.
In this case, it is difficult to provide a good evaluation of the point-wise error $\Prob(Y \notin \Gamma(\bsX) \mid \bsX = \bsx)$ involved in the constraint. Indeed, for a specific $\bsx \in  \class{X}$ we have only access to a binary variable $\ind{Y \notin \Gamma(\bsx)}$. Hence, unless we have additional assumptions on the distribution of $(\ind{Y \notin \Gamma(\bsX)}, \bsX)$ it is difficult to estimate the conditional expectation $\Exp[ \ind{Y \notin \Gamma(\bsX)}\mid \bsX] = \Prob(Y \notin \Gamma(\bsX) \mid \bsX )$.
Instead, we report a weaker quantity -- the class conditional error rate $\Prob(Y \notin \Gamma(\bsX) \mid Y = y)$ for all $y\in [L]$.

As shown in \Cref{fig:pointwise-error-rate-constraint-estimation}-(a), the constraint is violated when we consider a neural network based estimator of the probabilities. It is however possible to enforce the constraint with additional steps:
\emph{i) calibration.} We apply a probability calibration strategy that requires an additional dataset, and which is well suited for neural networks \citep{Guo2017}. This step consists in learning a temperature parameter $T$ which is used to scale the predictions in the logit space before applying the softmax. As displayed in \Cref{fig:pointwise-error-rate-constraint-estimation} (mid), this approach provides a good correction in terms of constraint violation. However, even with this calibration step, the constraint is still violated; \emph{ii) correction.} We consider the correction step described given by \Cref{eq:correction_pointwise_error}. The effect of the calibration and the correction is displayed in \Cref{fig:pointwise-error-rate-constraint-estimation} (right) where we observe that the constraint in this case is satisfied up to $90\%$ percentile.

Apart from the recalibration, we conclude that for the particular \myref{eq:point_wise_error} formulation, having properly calibrated probabilities is important.


This experiment shows that point-wise error constraint is neither easy to satisfy nor easy to measure in practice.
Moreover, although theoretically, we do not need an extra dataset to learn this constraint, in practice using this additional data to calibrate the probabilities is very helpful and can improve the performance.
In the rest of the experiments, we will thus use temperature scaling to calibrate probabilities when studying this formulation.

\subsubsection{Satisfiability of average constraints}

\begin{figure}[t]
  \centering
  \begin{subfigure}[b]{.48\textwidth}
    \includegraphics[width=\textwidth]{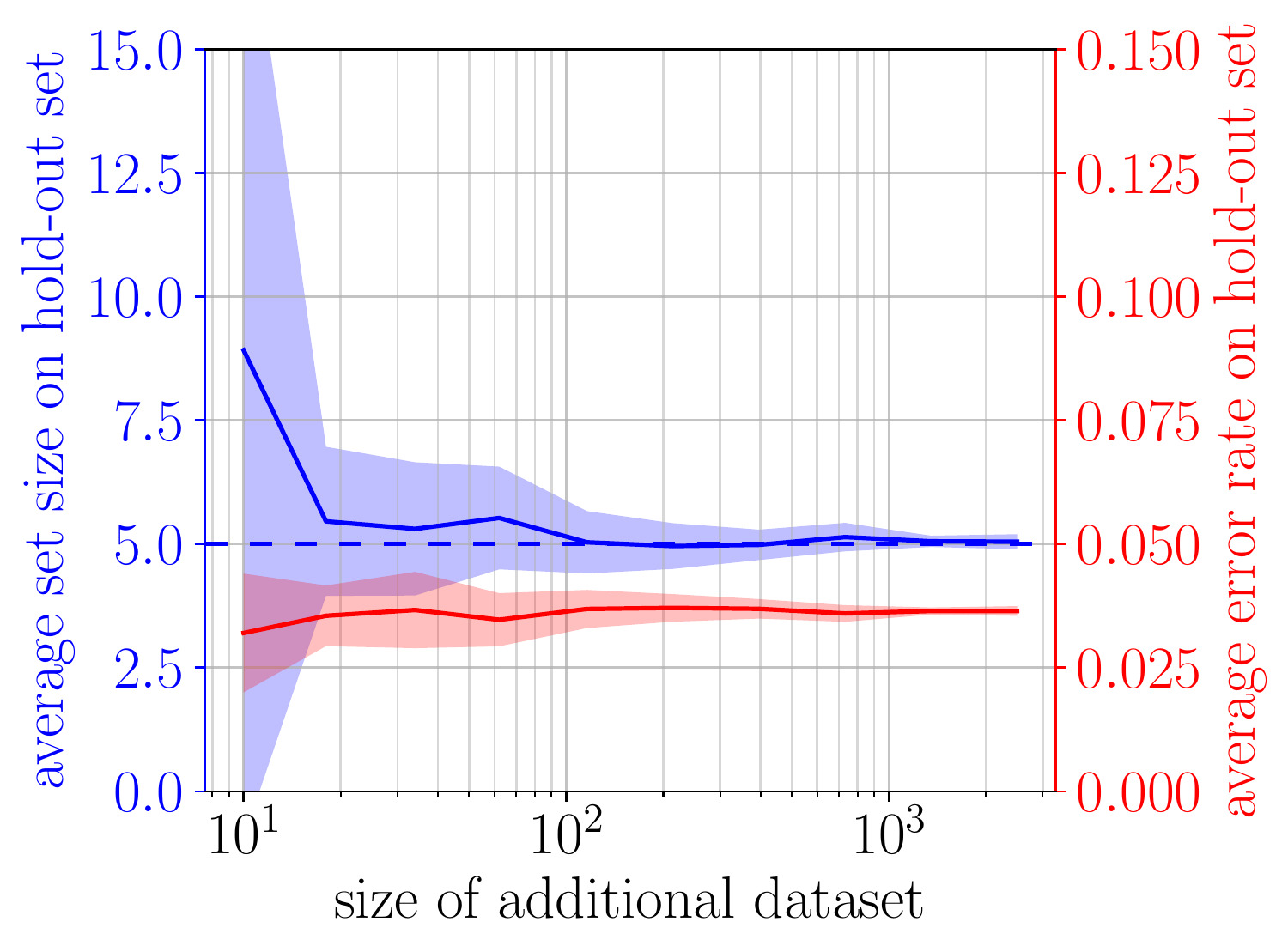}
    \caption{Average size control with $\bar{k}=5$.}
  \end{subfigure}
  \hfill
  \begin{subfigure}[b]{.48\textwidth}
    \includegraphics[width=\textwidth]{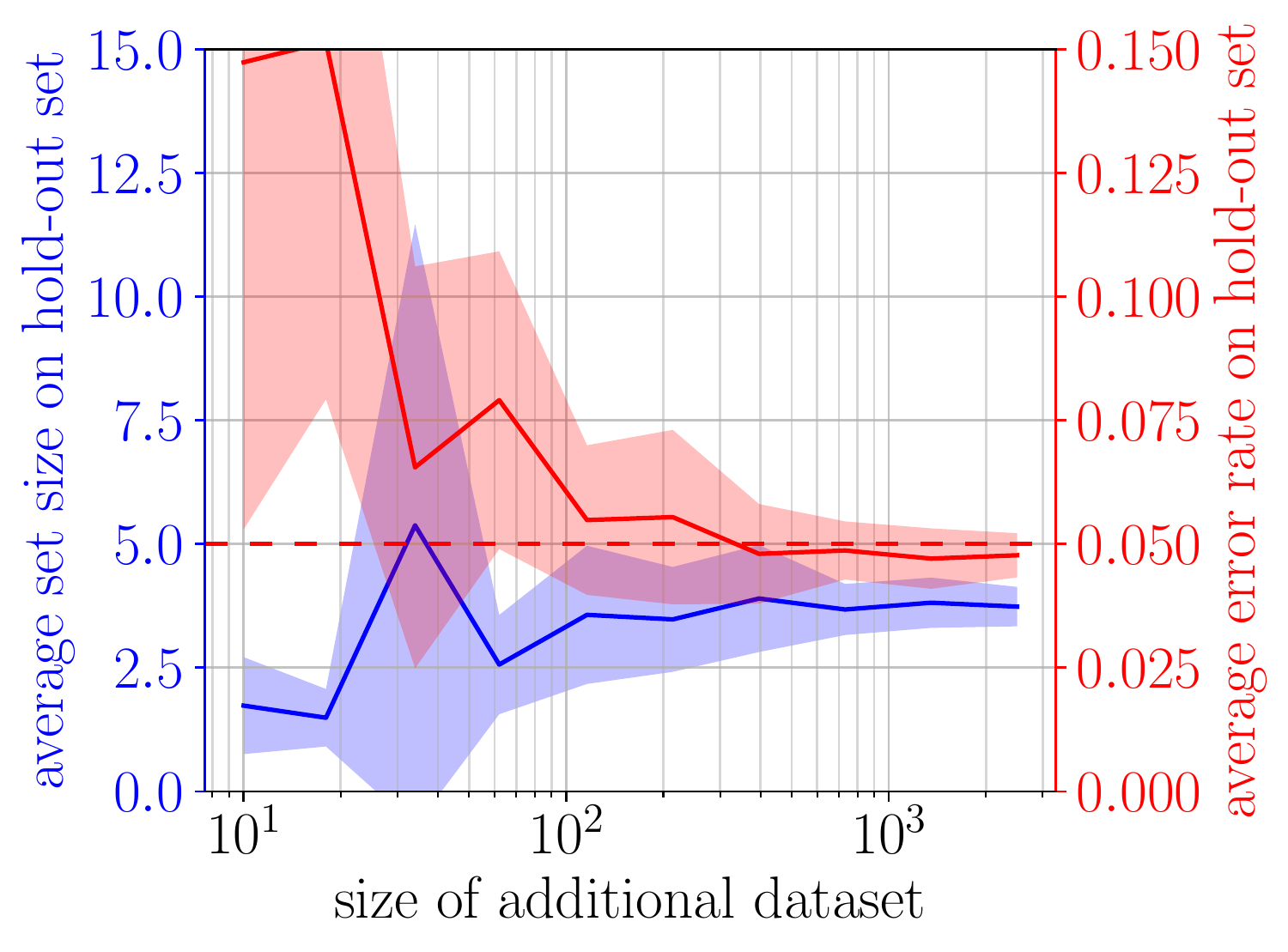}
    \caption{Average error control with $\bar{\epsilon}=0.05$.}
  \end{subfigure}
  \caption{
    Estimation plots depending on the size of the additional dataset of the thresholds of the average size control and average error control formulations on \dataset{ImageNet}.
    The mean values are shown in full line while the shaded areas correspond to the standard deviation.
    The dashed line represents the constraint to estimate.
    Average size control requires less additional data than average error control to estimate a good threshold.
  }
  \label{fig:average-constraints-estimation}
\end{figure}

We now focus on constraint violation for \myref{eq:average_error} and \myref{eq:average_size} frameworks.
In both cases, we first need to compute a threshold that we estimate thanks to an additional dataset\footnote{One can use the same dataset to estimate the threshold and to evaluate the constraint, however, such an approach would lead to dependency issues.}. The size of this sample is denoted by $n'$ and $N$ for the \myref{eq:average_error} and \myref{eq:average_size} frameworks respectively, see \Cref{eq:estimationCDFAverageSize} and \Cref{eq:estimationErrorAverageError} respectively.
In what follows, we illustrate the effect of the sample sizes $n'$ and $N$ on the corresponding constraints. Secondly, we need another dataset to measure the average error rate and the average set size according to the considered framework. Therefore, we split the validation set into two sets of equal sizes. One part is then used to sample the additional dataset while the second one serves as a hold-out set to measure the average set size and average error rate.
The sampling of the additional dataset is performed $10$ times to measure standard deviations.
The results are shown in \Cref{fig:average-constraints-estimation}.

For \myref{eq:average_size} framework, the average size of the predicted sets lies within an interval around the fixed constraint and the convergence is fast \wrt $N$. Interestingly, even the average error rate of those sets is very stable and converges fast as well.
In contrast, for \myref{eq:average_error} framework, both of the average error rate and the set size are highly oscillating and converge significantly more slowly. It seems in addition that the error rate, which is the parameter of interest here, is biased towards higher error rates for small values of $n'$.
The main conclusion drawn from these plots is thus that the calibration of threshold requires fewer samples for \myref{eq:average_size} than for \myref{eq:average_error}. This is in accordance with the general intuition since the error rate depends on the distribution of $(\bsX,Y)$ while the set size relies only on the distribution of $\bsX$. As a consequence more data are needed to ensure good estimation of all the functions $ t \mapsto  \Prob (p_\ell (\bsX) > t)  $ for all $\ell \in[L]$ in the same time.

\subsection{Comparison on real-world datasets}

We now study the different formulations on different real-world datasets and analyze their properties.
We focus on three image classification datasets: \texttt{MNIST} \citep{LeCun1998}, \texttt{ImageNet} \citep{Russakovsky2015} and \texttt{PlantCLEF2015} \citep{Goeau2015}.

\begin{figure}[t]
  \foreach \dataset / \name in {mnist/\texttt{MNIST},imagenet/\texttt{ImageNet},plantclef2015/\texttt{PlantCLEF2015}} {
    \begin{subfigure}{.32\textwidth}
          \includegraphics[width=\textwidth]{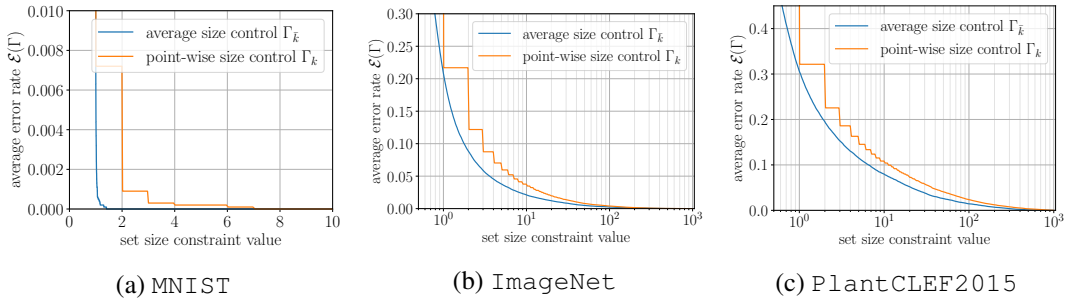}
      \caption{\name}
    \end{subfigure}
    \hfill
  }
  \caption{
    Comparison of \myref{eq:average_size} and \myref{eq:top_k} formulations, \ie average error rate minimization under set size constraint formulations, on different datasets.
    The average error rate is plotted against the value of the constraint for both formulations, \ie $\bar{k}$ for the average control and $k$ for the point-wise control.
  }
  \label{fig:average-pointwise-size-control-comparison}
\end{figure}

We first compare the formulations sharing the same objective but with a different constraint.
In \Cref{fig:average-pointwise-size-control-comparison}, we compare \myref{eq:average_size} and \myref{eq:top_k} formulations by plotting the achieved average error rate against the value of the constraint on the set size.
Note that the curve of \myref{eq:top_k} is a piece-wise constant function as the point-wise set size can only take discrete values.
The average set size does not have this constraint and thus the curve of \myref{eq:average_size} is continuous.
As can be seen from the figure, the curve of \myref{eq:average_size} is always below the one of \myref{eq:top_k}.
From left to right, the two curves are less and less steep.
This suggests that those datasets have a different amount of ambiguity: \texttt{MNIST} is far less ambiguous than \texttt{PlantCLEF2015}.
Moreover, the wideness of the gap between the two curves depends on the dataset.

\begin{figure}[t!]
  \begin{subfigure}{\textwidth}
    \foreach \xmax in {1.0,0.005} {
      \begin{subfigure}[b]{.45\textwidth}
        \includegraphics[width=\textwidth]{imgs/experiments/mnist_min_avg_set_size_const_error_rate_xmax_\xmax}
      \end{subfigure}
      \hfill
    }
    \caption{\dataset{MNIST}}
  \end{subfigure}
  \vskip
  \baselineskip
  \begin{subfigure}{\textwidth}
    \foreach \xmax in {1,0.2} {
      \begin{subfigure}[b]{.45\textwidth}
        \includegraphics[width=\textwidth]{imgs/experiments/imagenet_min_avg_set_size_const_error_rate_xmax_\xmax}
      \end{subfigure}
      \hfill
    }
    \caption{\dataset{ImageNet}}
  \end{subfigure}
  \vskip
  \baselineskip
  \begin{subfigure}{\textwidth}
    \foreach \xmax in {1,0.2} {
      \begin{subfigure}[b]{.45\textwidth}
        \includegraphics[width=\textwidth]{imgs/experiments/plantclef2015_min_avg_set_size_const_error_rate_xmax_\xmax}
      \end{subfigure}
      \hfill
    }
    \caption{\dataset{PlantCLEF2015}}
  \end{subfigure}
  \caption{
    Comparison of \myref{eq:average_error} and \myref{eq:point_wise_error} formulations, \ie average set size minimization under error rate constraint formulations, on different datasets.
    The average set size is plotted against the value of the constraint for both formulations, \ie $\bar{\epsilon}$ for the average control and $\epsilon$ for the point-wise control.
    The complete plots are displayed on the left column while the right one shows a zoom on a subpart of the plot.
  }
  \label{fig:average-pointwise-error-control-comparison}
\end{figure}

Similarly, in \Cref{fig:average-pointwise-error-control-comparison}
, we compare \myref{eq:average_error} and \myref{eq:point_wise_error} formulations by plotting the achieved average set size against the value of the constraint on the error rate.
As can be noticed, \myref{eq:average_error} allows to reach average set sizes below one whereas \myref{eq:point_wise_error} necessarily predicts sets of size larger or equal to one.
Theoretically, if the constraints were properly enforced, the curve of \myref{eq:point_wise_error} should always be above the curve of \myref{eq:average_error}.
However, as we discussed in the previous section, the point-wise constraint is hard to guarantee in practice and thus this property can be violated as it is the case on \texttt{PlantCLEF2015}.
This may suggest that the models on this dataset are unable to accurately estimate $\bsp$.
As with the previous figure, these plots also give us insight into the amount of ambiguity in the datasets.
In particular, we can see that it is easy to achieve a very low point-wise error rate on \texttt{MNIST} while keeping an average set size below two.
Therefore \myref{eq:point_wise_error} formulation is useful for \texttt{MNIST} if strong guarantee on the error rate is desired.
On the other hand, for \texttt{ImageNet}, enforcing a low point-wise error rate implies predicting sets of size far larger than $10$ on average which becomes not informative in most cases.
Constraining the average error rate, however, allows predicting sets far smaller.
Thus, for \texttt{ImageNet}, the \myref{eq:average_error} formulation might be preferred.

\begin{figure}[t!]
  \foreach \dataset / \name in {mnist/MNIST,imagenet/ImageNet,plantclef2015/PlantCLEF2015} {
    \begin{subfigure}{.32\textwidth}
      \includegraphics[width=\textwidth]{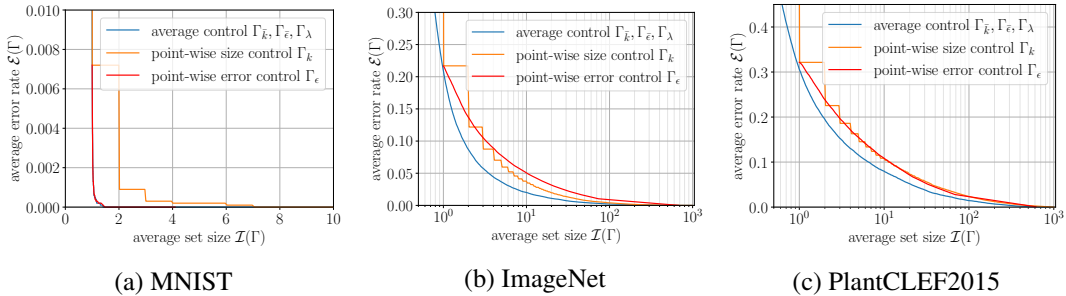}
      \caption{\name}
    \end{subfigure}
    \hfill
  }
  \caption{
    Comparison of the different frameworks in terms of their average set size / average error rate curves.
    In this case, all average control frameworks share the same optimal curve.
    The curves of the other frameworks are necessarily above this curve.
  }
  \label{fig:average-size-error-comparison}
\end{figure}

In \Cref{fig:average-size-error-comparison}, we compare all the frameworks to each other by plotting their achieved average error rate against their measured average set size when varying their constraints.
On this plot, the average formulations share the same curve which is also the optimal curve (since the constraint for one formulation is the objective of the other).
Interestingly, for the different datasets, the two point-wise control formulations behave differently.
On \texttt{MNIST}, the \myref{eq:point_wise_error} is very close to the optimal curve, while on \texttt{ImageNet}, it is above the \myref{eq:top_k} curve, and on \texttt{PlantCLEF2015}, the curves of the point-wise control formulation coincide.
This is in line with our previous comment on the usefulness of \myref{eq:point_wise_error} formulation for \texttt{MNIST} and its limitations for \texttt{ImageNet}.

\begin{figure}[t!]
  \centering
  \begin{subfigure}[b]{\textwidth}
    \includegraphics[width=\textwidth]{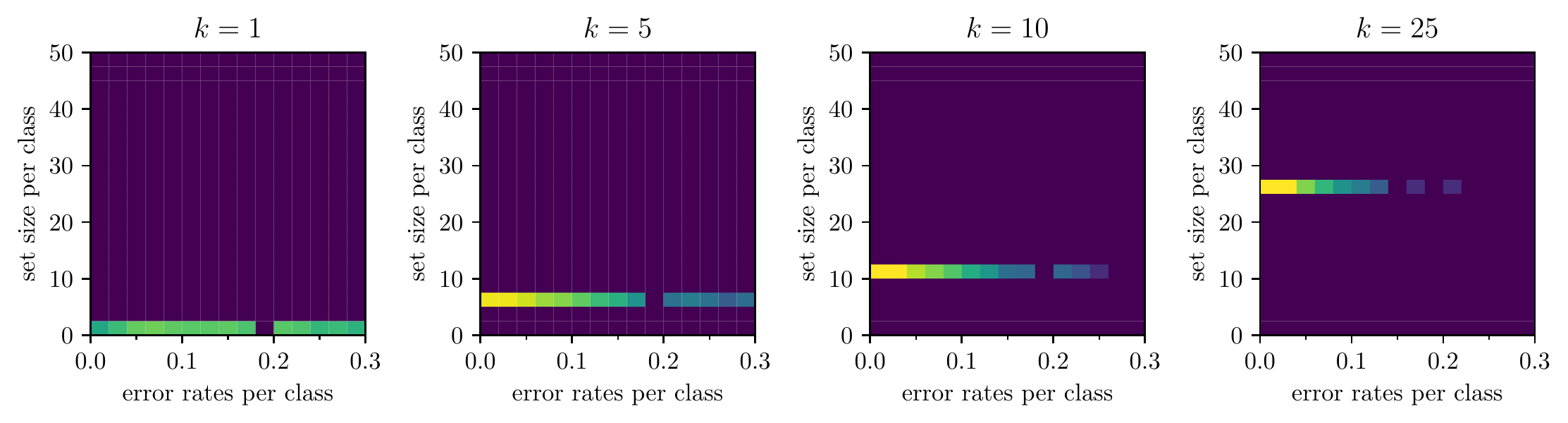}
    \caption{
      Point-wise set size control.
    }
  \end{subfigure}
  \\
  \begin{subfigure}[b]{\textwidth}
    \includegraphics[width=\textwidth]{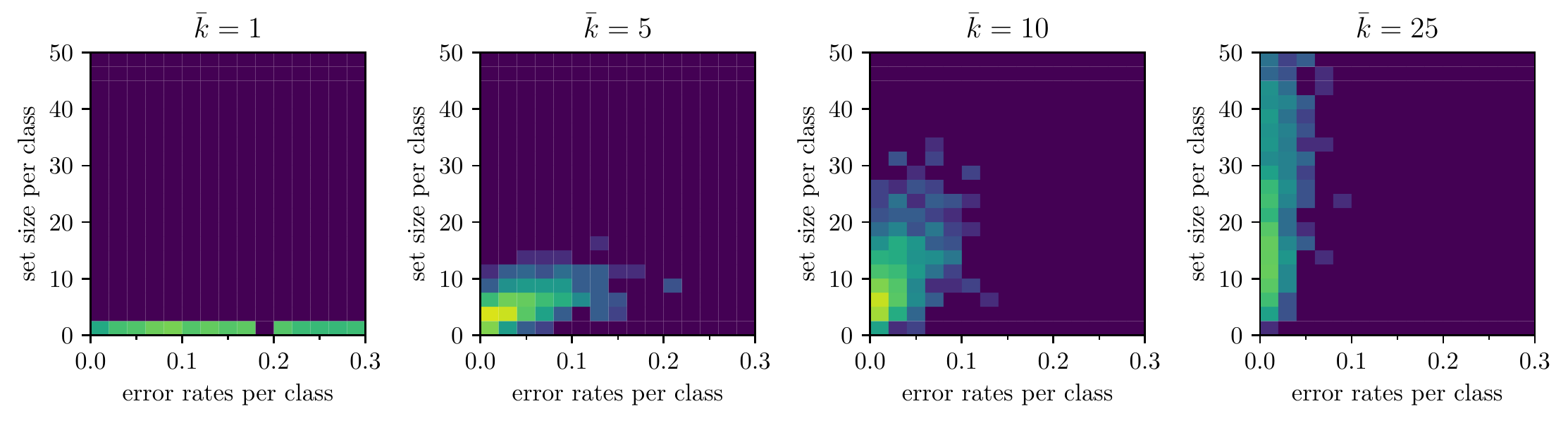}
    \caption{
      Average set size control.
    }
  \end{subfigure}
  \\
  \begin{subfigure}[b]{\textwidth}
    \includegraphics[width=\textwidth]{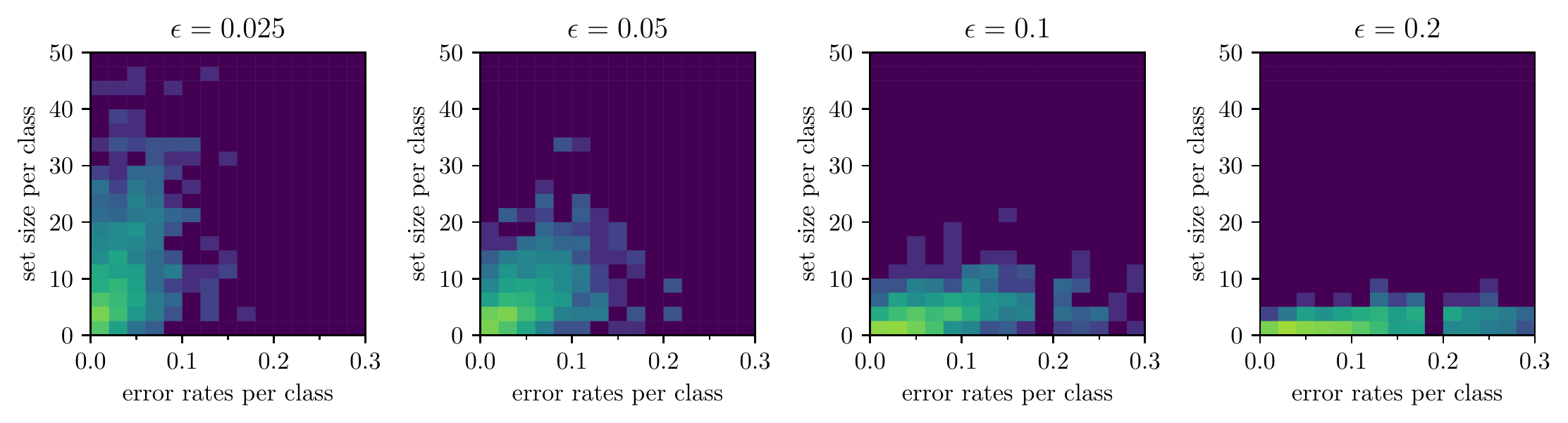}
    \caption{
      Point-wise error rate control.
    }
  \end{subfigure}
  \\
  \begin{subfigure}[b]{\textwidth}
    \includegraphics[width=\textwidth]{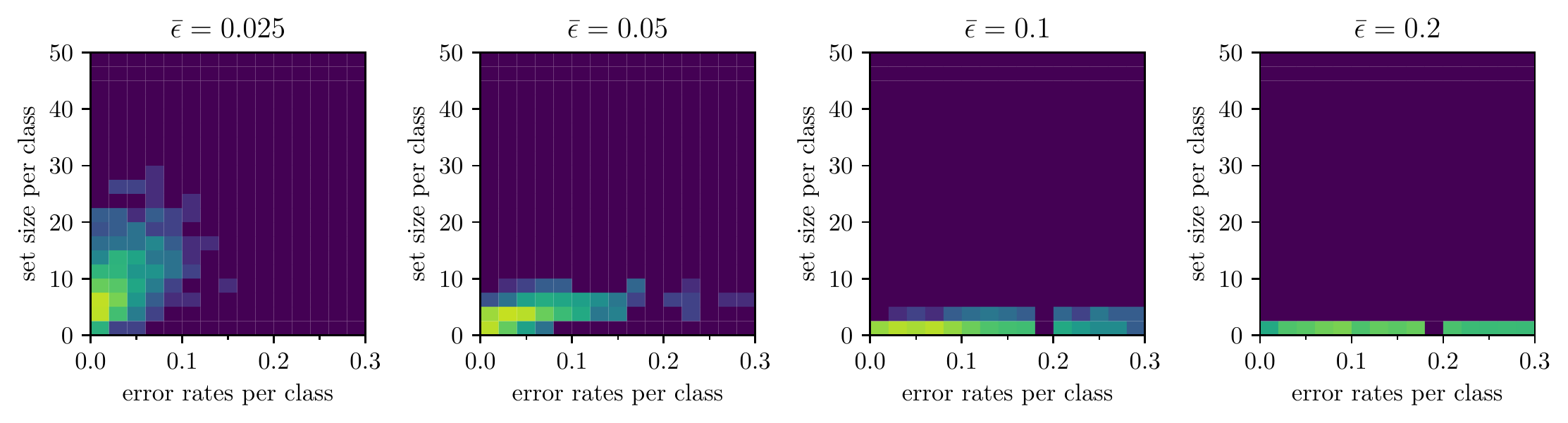}
    \caption{
      Average error rate control.
    }
  \end{subfigure}
  \caption{
    Comparison of the different formulations by looking at the distribution of the error rates and set sizes per class (on \dataset{ImageNet}).
    The plots share the same color scale.
  }
  \label{fig:pointwise-size-error-comparison}
\end{figure}

Finally, in \Cref{fig:pointwise-size-error-comparison}, we compare the distributions of the point-wise set sizes and point-wise error rates for the different formulations.
These distributions are measured on \texttt{ImageNet} in a similar fashion than in the previous subsection: the error rates and set sizes are computed for each class and serve as proxies for the point-wise error rate and set size.
These distributions are plotted using 2D histograms for various values of the constraints.
To compare them, all the plots share the same color scale.
Compared to the other frameworks, \myref{eq:top_k} have a particular behavior: it allows a strict control on the point-wise set size with all the samples having exactly the same set size.
The class error rates are thus always distributed along a line for this framework.
Comparing with \myref{eq:average_size}, we can see how the relaxation of the point-wise constraint allows reducing the error rate by predicting bigger sets for samples of higher error rate and smaller ones for samples with lower error rate.
This remark is true for all formulations (except \myref{eq:top_k}).
Indeed, when transitioning from a weak to a strong constraint, we can see that the point-wise error rates and the point-wise set size are correlated: the higher the error rate, the more labels are predicted, and vice-versa.
To conclude, by focusing on the error control frameworks, we can notice that the set sizes are more dispersed for \myref{eq:point_wise_error} than for \myref{eq:average_error} at equal constraint values.

These different experiments highlight the similarities and differences between the described set-valued classification frameworks.

\section{Other set-valued frameworks}
\label{sec:discussion_other_framework}
Previous sections were dealing with constrained formulations with the constrains of similar nature, that is, the constraints were either point-wise or in expectation.
In principle, we can combine both point-wise and expected constraints in one to retain positive aspects of both as well as modify the objective function which trades-off the two adaptively.
In this section we present some natural strategies and discuss their features.
\subsection{Hybrid frameworks}
\label{sec:hybrids}

In what follows, we describe possible ways to combine constraints of different types highlighting the issues that can arise from not a careful choice of the tuning parameters.

\subsubsection{Hybrid size control}
Minimizing the error given constraint on the expected set size and upper bound on point-wise: given $k \in \mathbb{N}, \bar{k} \in \mathbb{R}$ such that $0 < \bar{k} < k \leq K$,
\usetagform{bold}
\begin{equation*}
    \tag{hybrid size control}
        \begin{aligned}
            \Gamma^*_{\bar{k},k} \in \argmin_\Gamma \; & \Prob(Y \notin \Gamma(\bsX) ) \\
            \text{s.t. } & \Exp_{\bsX} |\Gamma(\bsX)| \leq \bar{k} \\
            & \phantom{\Exp_{\bsX}}|\Gamma(\bsX)| \leq k, \text{ a.s. } \Prob_{\bsX}
        \end{aligned}
\end{equation*}
\usetagform{default}

\begin{lemma}
    \label{lem:hybrid_size_control_oracle}
    Let Assumption~\ref{ass:continuity_global} be satisfied. Fix $\bar{k} < k$ and define
    \begin{align*}
        G_k(t) = \sum_{\ell = 1}^k \Prob \left( p_{(\ell)}(\bsX) \geq t \right)\enspace.
    \end{align*}
    Then, an optimal set-valued classifier $\Gamma^*_{\bar{k}, k}$ can be obtained, for all $\bsx \in \mathcal{X}$ as
    \begin{align*}
        \Gamma^*_{\bar{k}, k}(\bsx) = \enscond{\ell \in [L]}{p_{\ell}(\bsx) \geq G_k^{-1}(\bar{k})} \bigcap \Top_{\bsp}(\bsx, k)\enspace,
    \end{align*}
    where $G_k^{-1}$ is the generalized inverse of $G_k$.
\end{lemma}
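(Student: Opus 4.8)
The plan is to turn the error rate into a coverage functional and then relax the expected-size constraint with a Lagrange multiplier, while keeping the point-wise cardinality bound as a hard restriction. First I would rewrite $\error(\Gamma) = 1 - \Exp_{\bsX}\big[\sum_{\ell \in \Gamma(\bsX)} p_\ell(\bsX)\big]$, so that minimizing the error is the same as maximizing the coverage $\Exp_{\bsX}\big[\sum_{\ell \in \Gamma(\bsX)} p_\ell(\bsX)\big]$ over classifiers obeying $|\Gamma(\bsx)| \le k$ almost surely and $\Exp_{\bsX}|\Gamma(\bsX)| \le \bar k$. For $\lambda \ge 0$ I would then introduce the penalized objective $\Exp_{\bsX}\big[\sum_{\ell \in \Gamma(\bsX)}(p_\ell(\bsX) - \lambda)\big] + \lambda \bar k$, which decouples across $\bsx$; maximizing it subject only to $|\Gamma(\bsx)| \le k$ therefore reduces to the finite problem $\max_{S\subseteq[L],\,|S| \le k} \sum_{\ell \in S}(p_\ell(\bsx) - \lambda)$ at each point.

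Next I would solve this point-wise problem, which is elementary: one includes only labels with positive contribution $p_\ell(\bsx) > \lambda$, and, because the contribution increases with $p_\ell(\bsx)$, keeps the $k$ largest whenever more than $k$ qualify. This produces exactly the claimed intersection $\enscond{\ell \in [L]}{p_\ell(\bsx) \ge \lambda} \cap \Top_{\bsp}(\bsx, k)$, which I will call $\Gamma_\lambda$. Ordering the probabilities, a label is retained precisely when it is one of the top $k$ and exceeds $\lambda$, so $\Exp_{\bsX}|\Gamma_\lambda(\bsX)| = \sum_{\ell=1}^{k}\Prob(p_{(\ell)}(\bsX) \ge \lambda) = G_k(\lambda)$. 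Since $G_k(0)=k>\bar k$ and, under Assumption~\ref{ass:continuity_global}, $G_k$ is continuous and non-increasing, the multiplier $\lambda^\star = G_k^{-1}(\bar k)$ makes the expected-size constraint tight, i.e. $G_k(\lambda^\star)=\bar k$, and $\Gamma_{\lambda^\star}$ is feasible with point-wise size at most $k$.

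Finally I would close the argument by weak duality. For any feasible $\Gamma$, writing $\Exp_{\bsX}[\sum_{\ell \in \Gamma(\bsX)} p_\ell] = \Exp_{\bsX}[\sum_{\ell \in \Gamma(\bsX)}(p_\ell - \lambda^\star)] + \lambda^\star \Exp_{\bsX}|\Gamma(\bsX)|$ and using $\lambda^\star \ge 0$, the feasibility $\Exp_{\bsX}|\Gamma(\bsX)| \le \bar k$, and the point-wise optimality of $\Gamma_{\lambda^\star}$, the coverage of $\Gamma$ is at most $\Exp_{\bsX}[\sum_{\ell \in \Gamma_{\lambda^\star}(\bsX)}(p_\ell - \lambda^\star)] + \lambda^\star \bar k$, which equals the coverage of $\Gamma_{\lambda^\star}$ because its expected size is exactly $\bar k$. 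Hence $\Gamma_{\lambda^\star} = \Gamma^*_{\bar k, k}$ minimizes $\error$ over all feasible classifiers, giving the stated form.

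The step I expect to be the main obstacle is the well-posedness and tightness of the multiplier: one must verify that $G_k$ is continuous so that $G_k(G_k^{-1}(\bar k)) = \bar k$ holds with \emph{equality} (this is where Assumption~\ref{ass:continuity_global} is essential — without it the exact constraint could only be met by a randomized classifier), and that $\bar k \in (0,k)$ forces $\lambda^\star \in (0,1)$, so that the multiplier is nonnegative as the duality sandwich requires. The coverage reformulation and the point-wise optimization are routine; the real care lies in reconciling the two constraints simultaneously through the single threshold-and-top-$k$ intersection.
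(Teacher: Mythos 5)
Your proof is correct and follows essentially the same route as the paper's: dualize only the expected-size constraint while keeping the point-wise bound $|\Gamma(\bsx)|\le k$ hard, solve the inner problem point-wise to obtain the threshold-intersect-top-$k$ form, set $\lambda^\star = G_k^{-1}(\bar k)$ using the continuity of $G_k$ under Assumption~\ref{ass:continuity_global} to make the expected size exactly $\bar k$, and close the weak-duality gap by primal feasibility. The only cosmetic differences are that you work with coverage maximization rather than error minimization and verify the sandwich directly instead of writing out the KKT system with a complementary-slackness variable, as the paper does.
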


\myparagraph{Pros and Cons}
Stronger set size control than for other formulations. Higher error rate than for soft set size control.

\subsubsection{Hybrid error control}
Minimizing the expected set size given constraint on the expected error rate and upper bound on point-wise: given $0 \leq \bar{\epsilon} < \epsilon \leq 1$,
\usetagform{bold}
\begin{equation*}
        \tag{hybrid error control}
        \begin{aligned}
            \Gamma^*_{\bar{\epsilon}, \epsilon} \in \argmin_\Gamma \; &  \Exp_{\bsX} |\Gamma(\bsX)| \\
            \text{s.t. } & \Prob(Y \notin \Gamma(\bsX)) \leq \bar{\epsilon} \\
            & \Prob(Y \notin \Gamma(\bsX) | \bsX) \leq \epsilon,  \text{ a.s. } \Prob_{\bsX}
        \end{aligned}
\end{equation*}
\usetagform{default}

\begin{lemma}
    \label{lem:size_size_hybrid}
    Let Assumption~\ref{ass:continuity_global} be satisfied. For $\bar{\epsilon} < \epsilon$ define
    \begin{align*}
        H_\epsilon(\cdot) = \Exp_{\bsX}\left[\sum_{\ell = 1}^{k_\epsilon(\bsX)}p_{(\ell)}(\bsX)\ind{ p_{(\ell)}(\bsX) \geq \cdot}\right]\enspace,
    \end{align*}
    where $k_\epsilon(\bsX)$ is defined in Lemma~\ref{lem:hard_coverage_control_oracle}.
    Then if $H_\epsilon(1 - \bar{\epsilon}) \neq 0$, then
    \begin{align*}
        \Gamma^*_{\bar{\epsilon}, \epsilon}(\bsx) = \enscond{\ell \in [L]}{p_{\ell}(\bsx) \geq H^{-1}_\epsilon(1 - \bar{\epsilon})}\enspace,
    \end{align*}
    where $H^{-1}_\epsilon$ is the generalized inverse of $ H_\epsilon$.
\end{lemma}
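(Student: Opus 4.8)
The plan is to treat the hybrid problem as the minimization of the linear functional $\Gamma\mapsto\Exp_{\bsX}|\Gamma(\bsX)|$ subject to one \emph{global} scalar constraint, the average error $\Prob(Y\notin\Gamma(\bsX))\leq\bar\epsilon$, together with the \emph{a.s.\ pointwise} family $\Prob(Y\notin\Gamma(\bsX)\mid\bsX=\bsx)\leq\epsilon$. Encoding $\Gamma$ by its indicators $\gamma_\ell(\bsx)=\ind{\ell\in\Gamma(\bsx)}$, both the size $\Exp_{\bsX}\sum_\ell\gamma_\ell(\bsX)$ and the coverage $\Exp_{\bsX}\sum_\ell\gamma_\ell(\bsX)\,p_\ell(\bsX)$ are linear in $\gamma$, which invites a Lagrangian/Neyman--Pearson treatment. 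I would relax only the single average-error constraint with a multiplier $\mu\ge 0$, leaving the pointwise constraints in force, and minimize the relaxed objective $\Exp_{\bsX}\sum_\ell\gamma_\ell(\bsX)\bigl(1-\mu\,p_\ell(\bsX)\bigr)$ separately at each $\bsx$. As in the earlier lemmas, Assumption~\ref{ass:continuity_global} (non-atomic $p_\ell(\bsX)$) is what guarantees that the resulting minimizer is deterministic, unique, and that the relevant constraint can be met with equality.

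The second step is the pointwise subproblem: for fixed $\bsx$ and $\theta=1/\mu$, minimize $\sum_\ell\gamma_\ell\bigl(1-\mu p_\ell(\bsx)\bigr)$ over $\gamma\in\{0,1\}^{L}$ subject to the coverage floor $\sum_\ell\gamma_\ell\,p_\ell(\bsx)\ge 1-\epsilon$. Unconstrained, the minimizer includes exactly the labels with $p_\ell(\bsx)\ge\theta$; the floor forces us, whenever those labels do not already reach coverage $1-\epsilon$, to add further top-ranked labels, and by Lemma~\ref{lem:hard_coverage_control_oracle} the minimal such addition is precisely the top-$k_\epsilon(\bsx)$ block. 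Thus the pointwise minimizer has the nested thresholding-with-floor form $\Top_{\bsp}(\bsx,\max\{m_\theta(\bsx),k_\epsilon(\bsx)\})$, where $m_\theta(\bsx)=|\{\ell:p_\ell(\bsx)\ge\theta\}|$. The key bookkeeping identity I would then verify is that, in the regime isolated by the hypothesis $H_\epsilon(1-\bar\epsilon)\neq 0$, the coverage credited against the average constraint is captured exactly by $H_\epsilon$: since $p_{(\ell)}(\bsx)$ is nonincreasing in $\ell$, one has $\sum_{\ell=1}^{k_\epsilon(\bsx)}p_{(\ell)}(\bsx)\ind{p_{(\ell)}(\bsx)\ge t}=\sum_{\ell\le\min(m_\theta,k_\epsilon)}p_{(\ell)}(\bsx)$, so that $H_\epsilon(\theta)$ is precisely the average coverage of the thresholded rule within the top-$k_\epsilon$ block.

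The third step is to tune the threshold. I would show $H_\epsilon$ is nonincreasing, and continuous under Assumption~\ref{ass:continuity_global}, so its generalized inverse is well defined; the non-degeneracy condition $H_\epsilon(1-\bar\epsilon)\neq 0$ places $1-\bar\epsilon$ in the nontrivial range of $H_\epsilon$ and guarantees that the average-error constraint is \emph{active}. Complementary slackness then pins the multiplier so that the average error equals $\bar\epsilon$, i.e.\ $H_\epsilon(\theta)=1-\bar\epsilon$, giving $\theta=H_\epsilon^{-1}(1-\bar\epsilon)$ and the announced thresholding form $\Gamma^*_{\bar\epsilon,\epsilon}(\bsx)=\enscond{\ell\in[L]}{p_\ell(\bsx)\ge H_\epsilon^{-1}(1-\bar\epsilon)}$. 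I would close by checking feasibility of both constraints for this candidate and optimality: feasibility of the average constraint is the defining equation, and the pointwise constraint must be re-derived from the nested structure; optimality then follows from the standard weak-duality inequality, since any feasible $\Gamma$ has Lagrangian value at least that of the pointwise minimizer at the tuned $\mu$.

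The main obstacle is the genuinely two-scale nature of the constraints: the single average constraint is handled by one multiplier, but the pointwise constraint must hold almost surely and interacts with thresholding through the variable floor $k_\epsilon(\bsx)$. The delicate point is to reconcile the nested ``threshold-or-floor'' minimizer $\Top_{\bsp}(\bsx,\max\{m_\theta,k_\epsilon\})$ with the clean thresholding statement of the lemma, i.e.\ to argue that at the tuned $\theta=H_\epsilon^{-1}(1-\bar\epsilon)$ the credited coverage and the feasible set are consistently described by $H_\epsilon$, and to confirm that no duality gap arises so that complementary slackness legitimately forces the average constraint to bind. Establishing the monotonicity/continuity of $H_\epsilon$ from Assumption~\ref{ass:continuity_global}, and hence the well-definedness of $H_\epsilon^{-1}$, is the routine part; the careful verification of the pointwise feasibility of the threshold rule is where I expect the argument to require the most attention.
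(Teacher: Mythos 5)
Your overall route is the same as the paper's: dualize only the average-error constraint with a single multiplier, solve the inner problem pointwise subject to the a.s.\ coverage floor, introduce $H_\epsilon$, pin the multiplier by complementary slackness, and close with weak duality. The genuine gap sits exactly where you flag it, and it is not a technicality. Your pointwise minimizer is $\Top_{\bsp}(\bsx,\max\{m_\theta(\bsx),k_\epsilon(\bsx)\})$ (threshold set padded up to the coverage floor), which is indeed the feasible minimizer of $\sum_\ell\gamma_\ell(1-\mu p_\ell(\bsx))$ under $\sum_\ell\gamma_\ell p_\ell(\bsx)\ge 1-\epsilon$. But your ``key bookkeeping identity'' $\sum_{\ell=1}^{k_\epsilon(\bsx)}p_{(\ell)}(\bsx)\ind{p_{(\ell)}(\bsx)\ge\theta}=\sum_{\ell\le\min(m_\theta,k_\epsilon)}p_{(\ell)}(\bsx)$ says that $H_\epsilon(\theta)$ is the average coverage of $\Top_{\bsp}(\bsx,\min\{m_\theta,k_\epsilon\})$ --- the \emph{intersection} of the threshold set with the top-$k_\epsilon$ block --- not of your $\max$ candidate. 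Hence the equation $H_\epsilon(\theta)=1-\bar\epsilon$ does not express complementary slackness for the rule you actually constructed, and the multiplier is not pinned down for it. The paper's proof proceeds with the other candidate: it takes $\Gamma_\lambda(\bsx)=\Gamma^*_\epsilon(\bsx)\cap\enscond{\ell}{p_\ell(\bsx)\ge 1/\lambda}$ as the inner solution, for which the dual value is $\lambda(1-\bar\epsilon)-\Exp_{\bsX}\bigl[\sum_{\ell=1}^{k_\epsilon(\bsX)}(\lambda p_{(\ell)}(\bsX)-1)_+\bigr]$ and the KKT stationarity condition is exactly $H_\epsilon(1/\lambda)=1-\bar\epsilon-\tau$; your two halves (the $\max$ minimizer and the $H_\epsilon$ bookkeeping) are mutually inconsistent as written.

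The second unresolved point is your closing feasibility check. The pure threshold rule $\enscond{\ell}{p_\ell(\bsx)\ge H_\epsilon^{-1}(1-\bar\epsilon)}$ in the lemma's conclusion need not satisfy the pointwise constraint $\sum_{\ell\in\Gamma(\bsx)}p_\ell(\bsx)\ge 1-\epsilon$ at every $\bsx$ (at points where few labels clear the threshold it can even be empty), so ``re-deriving the pointwise constraint from the nested structure'' is the crux of the argument, not a routine verification; the same issue arises for primal feasibility of the intersection candidate. To complete the proof along either your lines or the paper's you must commit to one candidate, verify both constraints for it, and match it against the weak-duality lower bound at the tuned multiplier. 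None of these steps is discharged in the proposal, so as it stands the plan does not yet yield the lemma.
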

\myparagraph{Pros and Cons} Stronger set coverage control than for other formulations. Higher average set size than for soft set coverage control.

\subsubsection{Controlling error and size}

\usetagform{bold}
\begin{equation}
    \tag{average error + point-wise size}
    \begin{aligned}
        \Gamma^*_{\bar{\epsilon}, k} \in \argmin_\Gamma \; &  \Exp_{\bsX} |\Gamma(\bsX)| \\
        \text{s.t. }
            & \Prob(Y \notin \Gamma(\bsX)) \leq \bar{\epsilon} \\
            & |\Gamma(\bsX)| \leq k,  \text{ a.s. } \Prob_{\bsX}
    \end{aligned}
\end{equation}
\usetagform{default}

\begin{lemma}
    Define $\epsilon_k$ as
    \begin{align*}
        \epsilon_k = \Prob(Y \notin \Top_{\bsp}(\bsX, k))\enspace.
    \end{align*}
    Fix $k \in [L]$, then if $\bar{\epsilon} < \epsilon_k$ the feasible set of the above problem is empty.
\end{lemma}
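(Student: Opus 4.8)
The plan is to observe that the point-wise size constraint alone already pins down the smallest achievable average error rate, and that this minimal value is exactly $\epsilon_k$. Indeed, the point-wise size control formulation treated in Lemma~\ref{lem:top_k_oracle} shows that among all set-valued classifiers $\Gamma$ obeying $|\Gamma(\bsx)| \leq k$ a.s.\ $\Prob_{\bsX}$, the top-$k$ rule $\Top_{\bsp}(\cdot, k)$ minimizes $\Prob(Y \notin \Gamma(\bsX))$. Consequently, for every such $\Gamma$ one has the lower bound $\Prob(Y \notin \Gamma(\bsX)) \geq \Prob(Y \notin \Top_{\bsp}(\bsX, k)) = \epsilon_k$, where the last equality is the definition of $\epsilon_k$.

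Given this, the infeasibility claim is immediate. First I would take an arbitrary $\Gamma$ lying in the feasible set of the hybrid problem. Such a $\Gamma$ must satisfy the point-wise constraint $|\Gamma(\bsX)| \leq k$ a.s., so the lower bound above applies and yields $\Prob(Y \notin \Gamma(\bsX)) \geq \epsilon_k$. But feasibility also forces the average-error constraint $\Prob(Y \notin \Gamma(\bsX)) \leq \bar{\epsilon}$. Chaining the two inequalities gives $\epsilon_k \leq \Prob(Y \notin \Gamma(\bsX)) \leq \bar{\epsilon}$, which contradicts the hypothesis $\bar{\epsilon} < \epsilon_k$. Hence no feasible $\Gamma$ can exist and the feasible set is empty.

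If I wanted the argument self-contained rather than quoting Lemma~\ref{lem:top_k_oracle}, I would reprove the optimality of top-$k$ by a point-wise exchange argument: for every $\bsx \in \class{X}$ the point-wise error equals $1 - \sum_{\ell \in \Gamma(\bsx)} p_{\ell}(\bsx)$, so under the budget $|\Gamma(\bsx)| \leq k$ it is minimized by selecting the $k$ labels with the largest conditional probabilities, i.e.\ $\Top_{\bsp}(\bsx, k)$. The only point requiring a line of care is that minimizing the error \emph{point-wise} at every $\bsx$ simultaneously minimizes the \emph{average} error; this holds because the average error is the $\Prob_{\bsX}$-expectation of the point-wise error and the point-wise minimizer is computed independently at each $\bsx$, so there is no coupling across points that could obstruct simultaneous minimization.

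There is essentially no genuine obstacle here: once the optimality of the top-$k$ classifier under the size budget is in hand, the statement is a one-line corollary, since $\epsilon_k$ is by construction the error of that optimal classifier and no size-$\leq k$ classifier can beat it. The entire content of the lemma is the remark that demanding an average error strictly below the best attainable value $\epsilon_k$ under the imposed size budget is vacuously impossible, which is precisely why the two constraints become incompatible as soon as $\bar{\epsilon} < \epsilon_k$.
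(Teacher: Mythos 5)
Your proof is correct and follows essentially the same route as the paper's: both argue by contradiction that any feasible $\Gamma$ would satisfy the point-wise size budget and hence, by the optimality of $\Top_{\bsp}(\cdot,k)$ from Lemma~\ref{lem:top_k_oracle}, have average error at least $\epsilon_k$, contradicting $\bar{\epsilon} < \epsilon_k$. Your additional self-contained justification of the top-$k$ optimality is a fine (if unnecessary) supplement.
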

\begin{proof}
    Assume the opposite, that is, there exists $\Gamma$ such that
    \begin{align*}
        \Prob(Y \notin \Gamma(\bsX)) \leq \bar{\epsilon},\quad |\Gamma(\bsX)| \leq k,  \text{ a.s. } \Prob_{\bsX}\enspace.
    \end{align*}
    Recall that $\Top(\bsX, k)$ is an optimal set-valued classifier with hard size constraints, but
    \begin{align*}
        \Prob(Y \notin \Gamma(\bsX)) \leq \bar{\epsilon} < \epsilon_k = \Prob(Y \notin \Top(\bsX, k))\enspace,
    \end{align*}
    which contradicts the optimality of $\Top(\bsX, k)$.
\end{proof}
\myparagraph{Pros and Cons} Requires careful parameter tuning.

\usetagform{bold}
\begin{equation*}
    \tag{average size + point-wise error}
    \begin{aligned}
        \Gamma^*_{\bar{k}, \epsilon} \in \argmin_\Gamma \; &  \Prob(Y \notin \Gamma(\bsX)) \\
        \text{s.t. }
            & \Exp_{\bsX} |\Gamma(\bsX)| \leq \bar{k} \\
            & \Prob(Y \notin \Gamma(\bsX) \mid \bsX) \leq \epsilon,  \text{ a.s. } \Prob_{\bsX}
    \end{aligned}
\end{equation*}
\usetagform{default}

\myparagraph{Pros and Cons} Requires careful parameter tuning.

\subsection{F-score}
\label{subsec:f-score}
While constraint formulations are attractive when an explicit control on the size or coverage is required, some problems might require to optimize a certain trade-off of the two.
Given a set-valued classifier $\Gamma$ we define its precision and recall as
\begin{align*}
    \precision(\Gamma) = \frac{\Prob(Y \in \Gamma(\bsX))}{\Exp|\Gamma(\bsX)|},\quad \recall(\Gamma) = \Prob(Y \in \Gamma(\bsX))\enspace,
\end{align*}
respectively.
Relying on this notions of precision and recall we can define F$_{\beta}$ score as a weighted harmonic average of the latter
\begin{align*}
    F_\beta(\Gamma) = \left(\frac{1}{1 + \beta^2}\precision^{-1}(\Gamma) + \frac{\beta^2}{1 + \beta^2}\recall^{-1}(\Gamma)\right)^{-1} = \frac{(1 + \beta)\Prob(Y \in \Gamma(\bsX))}{\beta^2+\Exp |\Gamma(\bsX)|}\enspace.
\end{align*}
Consequently, an optimal set-valued classifier in terms of the F$_\beta$ score is defined as
\usetagform{bold}
\begin{align*}
    \tag{F-score}
    \Gamma^*_{\text{F}} \in \argmax_\Gamma \frac{(1 + \beta^2)\Prob(Y \in \Gamma(\bsX) )}{\beta^2+\Exp|\Gamma(\bsX)|}\enspace.
\end{align*}
\usetagform{default}

\begin{lemma}
    Fix $\beta > 0$, then an optimal set-valued classifier $\Gamma^*_{\text{F}}$ can be obtained for all $\bsx \in \class{X}$ as
    \begin{align*}
        \Gamma^*_{\text{F}}(\bsx) = \enscond{\ell \in [L]}{p_{\ell}(\bsx) \geq \theta^*}\enspace,
    \end{align*}
    where $\theta^*$ is a unique root of
    \begin{align*}
        \theta \mapsto \beta^2 \theta - \sum_{\ell = 1}^L\Exp(p_{\ell}(\bsX) - \theta)_+\enspace,
    \end{align*}
    where $(a)_+$ denotes the positive part of $a$, \ie $(a)_+ = \max(a,0)$.
\end{lemma}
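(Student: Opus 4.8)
The plan is to exploit the linear-fractional structure of the $\mathrm{F}_\beta$ objective. First I would encode a set-valued classifier through its indicator coordinates $\gamma_\ell(\bsx) \eqdef \ind{\ell \in \Gamma(\bsx)} \in \{0,1\}$ and rewrite the numerator and denominator of $F_\beta$ as linear functionals of $(\gamma_\ell)_\ell$: since $\Prob(Y \in \Gamma(\bsX)) = \sum_{\ell} \Exp[\gamma_\ell(\bsX) p_\ell(\bsX)]$ and $\Exp|\Gamma(\bsX)| = \sum_\ell \Exp[\gamma_\ell(\bsX)]$, the score reads
\[
    F_\beta(\Gamma) = \frac{(1+\beta^2)\sum_{\ell} \Exp[\gamma_\ell(\bsX) p_\ell(\bsX)]}{\beta^2 + \sum_{\ell} \Exp[\gamma_\ell(\bsX)]} \eqdef \frac{N(\Gamma)}{D(\Gamma)} \enspace.
\]
Because this is a ratio of two affine functionals with strictly positive denominator ($D(\Gamma) \geq \beta^2 > 0$), the natural tool is Dinkelbach linearization: the maximizer of the ratio coincides with that of the linearized objective $N(\Gamma) - \mu D(\Gamma)$ evaluated at $\mu = F^* \eqdef \sup_\Gamma F_\beta(\Gamma)$, and $F^*$ is characterized by $\sup_\Gamma\,[N(\Gamma) - F^* D(\Gamma)] = 0$.

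Next I would solve the linearized problem pointwise. Writing $N(\Gamma) - \mu D(\Gamma) = -\mu\beta^2 + \sum_{\ell}\Exp[\gamma_\ell(\bsX)((1+\beta^2) p_\ell(\bsX) - \mu)]$, the inner expectation is maximized over $\gamma_\ell \in \{0,1\}$ by retaining exactly the labels whose coefficient is nonnegative, i.e. by the thresholding rule $\gamma_\ell(\bsx) = \ind{p_\ell(\bsx) \geq \theta}$ with $\theta = \mu/(1+\beta^2)$. This already yields the announced thresholding form, and it remains to fix the threshold. Substituting the maximizer back and using $(p_\ell - \theta)\ind{p_\ell \geq \theta} = (p_\ell - \theta)_+$, the optimality equation $\sup_\Gamma[N - \mu D] = 0$ reduces --- after dividing by $(1+\beta^2)$ and setting $\mu = (1+\beta^2)\theta$ --- to $\beta^2\theta - \sum_{\ell=1}^L \Exp[(p_\ell(\bsX) - \theta)_+] = 0$, which is exactly the stated equation for $\theta^*$.

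Rather than invoke Dinkelbach abstractly, I would make the argument self-contained. Let $\theta^*$ be a root of $\phi(\theta) \eqdef \beta^2\theta - \sum_\ell \Exp[(p_\ell(\bsX) - \theta)_+]$ and let $\Gamma^*_{\mathrm{F}}$ be the associated thresholding rule. A one-line computation using $\phi(\theta^*) = 0$ gives $N(\Gamma^*_{\mathrm{F}}) = (1+\beta^2)\theta^* D(\Gamma^*_{\mathrm{F}})$, i.e. $F_\beta(\Gamma^*_{\mathrm{F}}) = (1+\beta^2)\theta^*$. For an arbitrary $\Gamma$, the pointwise bound $\gamma_\ell(p_\ell - \theta^*) \leq (p_\ell - \theta^*)_+$ combined with $\sum_\ell \Exp[(p_\ell - \theta^*)_+] = \beta^2\theta^*$ gives $N(\Gamma) - (1+\beta^2)\theta^* D(\Gamma) \leq 0$, whence $F_\beta(\Gamma) \leq (1+\beta^2)\theta^* = F_\beta(\Gamma^*_{\mathrm{F}})$ since $D(\Gamma) > 0$. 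This comparison needs no continuity hypothesis, and ties at the threshold are harmless because such labels carry a zero coefficient.

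Finally I would establish existence and uniqueness of $\theta^*$. The function $\phi$ is continuous and strictly increasing, as $\phi(\theta_2) - \phi(\theta_1) \geq \beta^2(\theta_2 - \theta_1) > 0$ for $\theta_1 < \theta_2$ (using $(p_\ell - \theta_1)_+ - (p_\ell - \theta_2)_+ \leq \theta_2 - \theta_1$). For the sign change, $\phi(0) = -\sum_\ell \Exp[p_\ell(\bsX)] = -1 < 0$ (because $\sum_\ell p_\ell \equiv 1$), while $\phi(1) = \beta^2 > 0$ (because $p_\ell \leq 1$), so the intermediate value theorem yields a unique root $\theta^* \in (0,1)$. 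The conceptual obstacle throughout is the ratio structure: one cannot optimize $F_\beta$ label-by-label or point-by-point directly, and the crux is to linearize the fraction correctly and to recognize that the self-consistency equation for the common threshold is precisely the stated root equation.
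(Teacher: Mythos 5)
The paper states this lemma without proof (the appendix only contains proofs for the point-wise error, penalized, and hybrid lemmas), so there is no in-paper argument to compare against; judged on its own, your proof is correct and complete. The Dinkelbach linearization is the right tool for the linear-fractional objective, and, importantly, you do not merely invoke it: the self-contained verification is airtight. Indeed, writing $\phi(\theta)=\beta^2\theta-\sum_{\ell}\Exp(p_\ell(\bsX)-\theta)_+$, your monotonicity bound $\phi(\theta_2)-\phi(\theta_1)\ge\beta^2(\theta_2-\theta_1)$, together with $\phi(0)=-\sum_\ell\Exp[p_\ell(\bsX)]=-1$ (using that $\bsp(\bsx)$ lies in the simplex) and $\phi(1)=\beta^2>0$, gives a unique root $\theta^*\in(0,1)$; the identity $N(\Gamma^*_{\mathrm{F}})=(1+\beta^2)\theta^*D(\Gamma^*_{\mathrm{F}})$ and the pointwise inequality $\gamma_\ell(p_\ell-\theta^*)\le(p_\ell-\theta^*)_+$ then pin down optimality for arbitrary $\Gamma$ since $D(\Gamma)\ge\beta^2>0$. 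Two cosmetic remarks: the inequality you cite for strict monotonicity ($(p_\ell-\theta_1)_+-(p_\ell-\theta_2)_+\le\theta_2-\theta_1$) is the one needed for an upper Lipschitz bound, whereas the lower bound $\phi(\theta_2)-\phi(\theta_1)\ge\beta^2(\theta_2-\theta_1)$ only requires $(p_\ell-\theta)_+$ to be nonincreasing in $\theta$ --- but both facts are trivially true, so nothing breaks. It is also worth noting that your argument needs no analogue of Assumption~\ref{ass:continuity_global}, consistent with the lemma statement, because labels with $p_\ell(\bsx)=\theta^*$ carry a zero coefficient in the linearized objective and can be included or excluded indifferently. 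Your route is the natural fractional-programming counterpart of the Lagrangian/KKT arguments the paper uses for its constrained formulations.
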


\myparagraph{Pros and Cons} Link with information retrieval metrics. The quality measured by one score functions. Less interpretable than constrained versions.

\section{Conclusion}
In this manuscript we have provided a review of set-valued classification literature using a unified framework of minimization under possibly distribution dependent constraints. We highlighted two main trade-offs that should be considered -- the size and the error. Both quantities can be measured point-wise or in average leading to various natural formulations of optimal set-valued classifiers.
We emphasized pros and cons of the described frameworks and provided empirical studies supporting our conclusions numerically.

\newpage
\appendix

\section{Omitted proofs}
\label{sec:ProofOptimalPopular}

In this appendix, we gather the proofs omitted in Section~\ref{sec:PopularSvFramework}.

\begin{proof}[Proof of Lemma~\ref{lem:hard_coverage_control_oracle}]
    First of all, notice that for all $\Gamma$ it holds that
    \begin{align*}
        \Prob(Y \notin \Gamma(X) | X)
        &= 1 - \Prob(Y \in \Gamma(X) | X=x)\\
        &= 1 - \sum_{k = 1}^K \Exp\left[\ind{k \in \Gamma(X)} \ind{Y = k}| X\right]\\
        &=
        1 - \sum_{k = 1}^K p_k(X)\ind{k \in \Gamma(X)}\enspace.
    \end{align*}
    Thus, the constraint
    \begin{align*}
        \Prob(Y \notin \Gamma(X) | X) \leq \epsilon,\quad\text{a.s.}\enspace,
    \end{align*}
    is equivalent to
    \begin{align*}
        \sum_{k = 1}^K p_k(X)\ind{k \in \Gamma(X)} \geq 1 - \epsilon\enspace.
    \end{align*}
    The above implies that $\Gamma^*_{\epsilon}$ can be obtained point-wise for all $x \in \mathcal{X}$ as
    \begin{align*}
        \Gamma^*_{\epsilon}(x) \in \argmin_{\gamma \subset [K]} & \sum_{k = 1}^K \ind{k \in \gamma} \\
        \text{s.t. } & \sum_{k \in \gamma} p_k(x) \geq 1 - \epsilon\enspace.
    \end{align*}
    The proof is concluded by solving the above problem.
\end{proof}

\begin{proof}[Proof of Lemma~\ref{lem:penalized_oracle}]
Fix $\lambda > 0$, the objective function $U(\Gamma) = \Prob(Y \notin \Gamma(\bsX) ) +  \lambda \  {\Exp_{\bsX}}|\Gamma(\bsX)|$ to be minimized can be written as
    \begin{align*}
        U(\Gamma)
        &=
        1 + \Exp_{\bsX}\sum_{\ell = 1}^L \left[(\lambda - p_{\ell}(\bsX))\ind{\ell \in \Gamma(\bsX)}\right]\enspace.
    \end{align*}
    Clearly, minimizing the above objective can be performed point-wise, that is, for all $\bsx \in \class{X}$ it holds that
    \begin{align*}
        \Gamma^*_{\lambda}(\bsx) = \enscond{\ell \in [L]}{p_{\ell}(\bsx) \geq \lambda}\enspace.
    \end{align*}
\end{proof}

\begin{proof}[Proof of Lemma~\ref{lem:hybrid_size_control_oracle}]
    Fix some $\bar{k} < k$.
    We are interested in a minimizer of the following problem
    \begin{align*}
        (*) = \min\enscond{\Prob(Y \notin \Gamma(\bsX) )}{\Exp_{\bsX} |\Gamma(\bsX)| \leq \bar{k},\,\,|\Gamma(\bsX)| \leq k, \text{ a.s. } \Prob_{\bsX}}\enspace.
    \end{align*}
    First of all notice that thanks to the weak duality we have
    \begin{align*}
        (*)
        &=
        \min_{\Gamma}\max_{\lambda \geq 0}\enscond{\Prob(Y \notin \Gamma(\bsX) ) + \lambda(\Exp_{\bsX} |\Gamma(\bsX)| - \bar{k})}{|\Gamma(\bsX)| \leq k, \text{ a.s. } \Prob_{\bsX}}\\
        &\geq
        \max_{\lambda \geq 0}\min_{\Gamma}\enscond{\Prob(Y \notin \Gamma(\bsX) ) + \lambda(\Exp_{\bsX} |\Gamma(\bsX)| - \bar{k})}{|\Gamma(\bsX)| \leq k, \text{ a.s. } \Prob_{\bsX}}\enspace.
    \end{align*}
    The objective function of the maxmin problem can be expressed as
    \begin{align*}
        \Prob(Y \notin \Gamma(\bsX) ) + \lambda(\Exp_{\bsX} |\Gamma(\bsX)| - \bar{k}) = 1 - \lambda \bar{k} + \Exp_{\bsX}\left[\sum_{\ell = 1}^L(\lambda - p_{\ell}(\bsX))\ind{\ell \in \Gamma(\bsX)}\right]\enspace.
    \end{align*}
    Solving the inner minimization problem for each fixed $\lambda \geq 0$ explicitly we derive $\Gamma_\lambda$ defined point-wise as
    \begin{align*}
        \Gamma_\lambda(\bsx) = \Top_{\bsp}(\bsx, k) \cap \Gamma^*_\lambda(\bsx)\enspace,
    \end{align*}
    where $\Top_{\bsp}(\bsx, k), \Gamma^*_\lambda(\bsx)$ are the Oracles defined in Lemma~\ref{lem:top_k_oracle} and~\ref{lem:penalized_oracle} respectively.
    Substituting into the objective function we get
    \begin{align*}
        (*) \geq 1 - \min_{\lambda \geq 0} \ens{\lambda \bar{k}  + \sum_{\ell = 1}^{k}\Exp_{\bsX}\left(p_{(\ell)}(\bsX) - \lambda\right)_+}\enspace.
    \end{align*}
    We remark that the minimization problem above is convex.
    Besides, we observe that under Assumption~\ref{ass:continuity_global} it holds that the sub-differential of $\lambda \to \Exp_{\bsX}\left(p_{(\ell)}(\bsX) - \lambda\right)_+$ is given by
    \begin{align*}
        \partial_{\lambda}\Exp_{\bsX}\left(p_{(\ell)}(\bsX) - \lambda\right)_+ = \{\Prob(p_{(\ell)}(\bsX) \geq \lambda)\}\enspace.
    \end{align*}
    Indeed, Assumption~\ref{ass:continuity_global} guarantees that $\Prob(p_{(\ell)}(\bsX) = \lambda) = 0$ for all $\lambda \geq 0$.
    Thus, under Assumption~\ref{ass:continuity_global} the KKT conditions of the minimization problem above are given by
    \begin{align*}
        \begin{cases}
            \sum_{\ell = 1}^{k} \Prob\left(p_{(\ell)}(\bsX) \geq \lambda\right) = \bar{k} - \tau\\
            \lambda, \tau \geq 0\\
            \lambda\tau = 0
        \end{cases}\enspace.
    \end{align*}
    Let $G_k(\cdot)$ be defined as
    \begin{align*}
        G_k(\cdot) = \sum_{\ell = 1}^{k} \Prob\left(p_{(\ell)}(\bsX) \geq \cdot\right)\enspace,
    \end{align*}
    and denote by $G_k^{-1}(\cdot)$ its generalized inverse.
    Continuity assumption~\ref{ass:continuity_global} guarantees that the function $G_k$ is continuous and thus we can solve the KKT conditions for $\lambda, \tau$ as
    \begin{align*}
    \begin{cases}
        \lambda = G_k^{-1}(\bar{k} - \tau)\enspace,\\
        \tau \geq 0\enspace,\\
        \lambda \tau = 0\enspace,
    \end{cases}
    \end{align*}
    We claim that if $\bar{k} < k$ then $G^{-1}_k(\bar k) \neq 0$.
    Assume the opposite. We clearly have $ G_k(0) = k$.
    Moreover, the fact that $\lambda \to G_k(\lambda)$
    is continuous non-increasing implies that for all $\bar k \in (0, k)$ we have
    \begin{align*}
       G_k(G^{-1}_k(\bar k)) = \bar k\enspace.
    \end{align*}
   By our assumption $G^{-1}_k(\bar k) = 0$ and thus
   \begin{align*}
       k = G_k(0) = G_k(G^{-1}_k(\bar k)) = \bar k\enspace.
   \end{align*}
    The above contradicts the fact that $\bar k < k$, hence $G^{-1}_k(\bar k) \neq 0$.

    Since $G^{-1}_k(\bar k) \neq 0$, then, complementary slackness condition implies that a solution is given by $\tau^* = 0$ and $\lambda^* = G^{-1}_k(\bar k)$.
    Also notice that the dual solution $\Gamma_{\lambda^*}(\bsx) = \Top_{\bsp}(\bsx, k) \cap \Gamma^*_{\lambda^*}(\bsx)$ is feasible for the primal problem.
    Indeed,
    \begin{align*}
        &|\Gamma_{\lambda^*}(\bsx)| \leq |\Top_{\bsp}(\bsx, k) | = k\\
        &\Exp|\Gamma_{\lambda^*}(\bsX)| = \sum_{\ell = 1}^k\Prob\left(p_{(\ell)}(\bsX) \geq G_k^{-1}(\bar k)\right) = G_k(G^{-1}_k(\bar k)) = \bar k\enspace.
    \end{align*}
    Thus, $\Gamma_{\lambda^*}(\bsx)$ is a solution of the primal problem.
\end{proof}

\begin{proof}[Proof of Lemma~\ref{lem:size_size_hybrid}]
    We start similarly to the proof of Lemma~\ref{lem:hybrid_size_control_oracle}.
    Fix $\bar{\epsilon} < \epsilon$, and set
    \begin{align*}
        (*) = \min_{\Gamma}\enscond{\Exp_{\bsX}|\Gamma(\bsX)|}{\Prob(Y \notin \Gamma(\bsX)) \leq \bar{\epsilon},\,\,\Prob(Y \notin \Gamma(\bsX) | \bsX) \leq \epsilon, \text{ a.s. } \Prob_{\bsX} }\enspace.
    \end{align*}
    Simple algebraic manipulations lead to
    \begin{align*}
        (*)
        &= \min_{\Gamma}\max_{\lambda \geq 0} \enscond{\lambda(1 - \bar{\epsilon}) + \Exp_{\bsX}\left[\sum_{\ell = 1}^L(1 - \lambda p_{\ell}(\bsX))\ind{\ell \in \Gamma(\bsX)}\right]}{\Prob(Y \notin \Gamma(\bsX) | \bsX) \leq \epsilon, \text{ a.s. } \Prob_{\bsX}}\\
        &\geq
        \max_{\lambda \geq 0}\min_{\Gamma} \enscond{\lambda(1 - \bar{\epsilon}) + \Exp_{\bsX}\left[\sum_{\ell = 1}^L(1 - \lambda p_{\ell}(\bsX))\ind{\ell \in \Gamma(\bsX)}\right]}{\Prob(Y \notin \Gamma(\bsX) \mid \bsX) \leq \epsilon, \text{ a.s. } \Prob_{\bsX}}\enspace.
    \end{align*}
    For each fixed $\lambda > 0$ a solution $\Gamma_\lambda$ of the inner minimization problem is given by
    \begin{align*}
        \Gamma_\lambda(\bsx) = \Gamma^*_\epsilon(\bsx) \cap \bar\Gamma(\bsx)\enspace,
    \end{align*}
    where $\Gamma^*_\epsilon$ is defined in Lemma~\ref{lem:hard_coverage_control_oracle} and $\bar\Gamma(\bsx)$ is defined as
    \begin{align*}
        \bar\Gamma(\bsx) = \enscond{\ell \in [L]}{p_{\ell}(\bsx) \geq 1 / \lambda}\enspace.
    \end{align*}
    Substituting this into the objective function we get
    \begin{align*}
        (*)
        &\geq
        \max_{\lambda \geq 0}\ens{\lambda(1 - \bar{\epsilon}) - \Exp_{\bsX}\left[\sum_{\ell = 1}^{k_\epsilon(\bsX)}\left(\lambda p_{(\ell)}(\bsX) - 1\right)_+\right]}\\
        &=
        -\min_{\lambda \geq 0}\ens{\Exp_{\bsX}\left[\sum_{\ell = 1}^{k_\epsilon(\bsX)}\left(\lambda p_{(\ell)}(\bsX) - 1\right)_+\right] - \lambda(1 - \bar{\epsilon})}\enspace.
    \end{align*}
    Under Assumption~\ref{ass:continuity_global} the KKT conditions are given by
    \begin{align*}
        &\Exp_{\bsX}\left[\sum_{\ell = 1}^{k_\epsilon(\bsX)}p_{(\ell)}(\bsX)\ind{ p_{(\ell)}(\bsX) \geq 1 / \lambda}\right] = 1 - \bar{\epsilon} - \tau\enspace,\\
        &\lambda, \tau \geq 0\enspace,\\
        &\lambda\tau = 0\enspace.
    \end{align*}
    Introduce
    \begin{align*}
       H_{\epsilon}(t) = \Exp_{\bsX}\left[\sum_{k = 1}^K\sum_{\ell = 1}^{k}p_{(\ell)}(\bsX)\ind{ p_{(\ell)}(\bsX) \geq t}\ind{k_{\epsilon}(\bsX) = k}\right]\enspace.
    \end{align*}
    Then, Assumption~\ref{ass:continuity_global} guarantees that
    \begin{align*}
         &\lambda = 1 / H_{\epsilon}^{-1}(1 - \bar{\epsilon} - \tau)\\
         &\lambda, \tau \geq 0\enspace,\\
         &\lambda\tau = 0\enspace.
    \end{align*}
    Notice that $H_{\epsilon}(0) \geq \Exp[p_{(1)}(\bsX)] \geq 1 / K$ and $H_{\epsilon}(1) = 0$.
    Hence, since $t \to H_{\epsilon}(t)$ is monotone continuous and $\Prob(p_{(1)}(\bsX) \geq 1 - \bar{\epsilon}) > 0$ it holds that $H_{\epsilon}^{-1}(1 - \bar{\epsilon}) > 0$.
\end{proof}

\bibliographystyle{imsart-nameyear}
\bibliography{bibliography.bib}
\newpage
\appendix


\end{document}